\renewcommand\ttdefault{lmtt}
\icmltitlerunning{Square-$\chi$PO}
\begin{document}

\twocolumn[
\icmltitle{Square$\chi$PO: Differentially Private and Robust $\chi^2$-Preference Optimization \\in Offline Direct Alignment}



\icmlsetsymbol{equal}{*}

\begin{icmlauthorlist}
\icmlauthor{Xingyu Zhou}{yyy}
\icmlauthor{Yulian Wu}{xxx}
\icmlauthor{Wenqian Weng}{yyy}
\icmlauthor{Francesco Orabona}{xxx}
\end{icmlauthorlist}

\icmlaffiliation{yyy}{Wayne State University, USA}
\icmlaffiliation{xxx}{King Abdullah University of Science and Technology, Saudi Arabia}

\icmlcorrespondingauthor{Xingyu Zhou}{xingyu.zhou@wayne.edu}

\icmlkeywords{Machine Learning, ICML}

\vskip 0.3in
]



\printAffiliationsAndNotice{}  

\begin{abstract}
In this paper, we theoretically study the offline alignment of language models with human preference feedback, under both preference label corruption and privacy protections. To this end, we propose \texttt{Square}\chipo, a simple one-line change to \chipo where the standard log-loss is replaced by a new square loss over probability. Thanks to the inherent properties of this new loss, we have advanced the state-of-the-art of differentially private and robust offline direct alignment. Specifically, for the local model of label privacy, \texttt{Square}\chipo is the first algorithm that attains an optimal rate based on single-policy concentrability even with general function approximations. It also gives the first result under the central model of privacy protection over both prompts (responses) and labels. On the robustness side against Huber label corruption, \texttt{Square}\chipo is the first alignment method that has a meaningful theoretical guarantee under general function approximations. More importantly, \texttt{Square}\chipo can address privacy protection and corruption \emph{simultaneously}, where an interesting separation is observed, implying that the order of privacy and corruption matters. Furthermore, we show that \texttt{Square}\chipo can also be easily extended to handle the scenario of the general preference model with state-of-the-art guarantees under corruption and privacy. Last but not least, all of our theoretical guarantees enjoy a unified analysis, building upon a new result on the generalization error bounds of least-square regression under corruption and privacy constraints, which we believe is of independent interest to the community.

\end{abstract}


\section{Introduction}

Aligning large language models (LLMs) to human values is crucial for their responsible deployment. Two primary paradigms have emerged: \emph{indirect alignment}, where a reward model is learned before the policy optimized via Reinforcement Learning (RL)~\cite{christiano2017deep,ouyang2022training}, and \emph{direct alignment}, an RL-free approach leveraging reparametrization techniques like Direct Preference Optimization (\dpo)~\cite{rafailov2024direct}. Very recently, a variant of \dpo, called \chipo~\cite{huang2024correcting}, addresses the overoptimization issue in direct alignment by relying on a significantly weaker condition -- single-policy concentrability -- making it the first offline direct alignment method with such a guarantee.

Meanwhile, privacy and robustness concerns in the preference datasets of the alignment process have gained significant attention. Membership inference attacks expose privacy vulnerabilities~\cite{feng2024exposing}, while data poisoning undermines label integrity~\cite{casper2023open}. Recent efforts have addressed these challenges separately, providing theoretical guarantees for privacy or robustness. On the privacy side, existing theoretical work has primarily focused on simple linear function approximations~\cite{zhou2025unified,chowdhury2024differentially,korkmazlearning}, which are insufficient for practical scenarios involving non-linear reward or policy function classes (e.g., neural networks).

\vspace{-2mm}
\begin{center} \noindent\fbox{ \parbox{0.95\linewidth}{ \textbf{Q1.} For general function approximations, can we achieve optimal (or better) rates under privacy constraints? }} \end{center}
\vspace{-2mm}

\textbf{Contribution 1.} We answer \textbf{Q1} affirmatively by introducing \schipo, a simple variant of \chipo which replaces the log loss with a new square loss over probabilities. For preference label privacy under the local model of Differential Privacy (DP)~\cite{kasiviswanathan2011can,chaudhuri2011sample}, \schipo achieves the optimal privacy cost, even with general function approximations. Furthermore, under the standard central DP model~\cite{dwork2006calibrating}, it provides the first \emph{pure DP} guarantees for the case of general function approximations. 

Moving now to the robustness side, \citet{mandal2024corruption} takes an indirect approach, focusing on the linear setting, while~\citet{chowdhury2024provably} follows a \dpo-style direct method, which, however only achieves a suboptimal rate for the linear case and suffers from a non-vanishing suboptimality gap for general function approximations.

\vspace{-2mm}
\begin{center} \noindent\fbox{ \parbox{0.95\linewidth}{ \textbf{Q2.} Can we improve these results under label corruption, even for general function approximations? }} \end{center}
\vspace{-2mm}

\textbf{Contribution 2.} Our \schipo provides an affirmative answer to \textbf{Q2}. Specifically, it not only preserves the favorable single-policy concentrability property of \chipo, but also achieves the optimal $\mathcal{O}(1/\sqrt{n})$ rate for general function approximations under the same random-flipping corruption setting as in \citet{chowdhury2024provably}. Furthermore, due to the inherent boundedness of our new loss, \schipo is the first alignment method to provide meaningful guarantees under stronger Huber label corruption~\cite{huber1964robust}, matching the best-known results in the non-preference feedback offline RL setting~\cite{zhang2022corruption}.

Instead of studying privacy protection and robustness to corruption separately, there is growing interest in understanding their interplay, driven by both practical scenarios and theoretical insights, for example, in bandits~\cite{zhoulocally,wu2024private,charisopoulos2023robust} or general statistical tasks; please refer to \citet{kamath2024broaderlandscaperobustnessalgorithmic} for a wonderful recent survey.

\vspace{-2mm}
\begin{center} \noindent\fbox{ \parbox{0.95\linewidth}{ \textbf{Q3.} Can we achieve privacy protection and robustness simultaneously, and what are the interplays between them? }} \end{center}
\vspace{-2mm}

\textbf{Contribution 3.} Our \schipo simultaneously addresses privacy and robustness in offline direct alignment, uncovering interesting interplays between the two. For the local model of label privacy, we examine two settings that differ in the order of privacy and corruption. \schipo is adaptive, as it does not require prior knowledge of the specific setting while providing sharp rates. Notably, our results reveal that corruption following privacy leads to worse bounds. For the central model of DP, our findings illustrate that the effects of privacy and corruption are only \emph{additive}. Both are consistent with prior observations in mean estimation and bandits~\cite{zhoulocally,wu2024private}.

All the above results (including those prior work) are established under the assumption of the Bradley-Terry (BT) preference model~\cite{bradley1952rank}, which implicitly assumes transitive preferences (i.e., $a \succ b, b \succ c \Rightarrow a \succ c$). However, transitivity does not always hold in practice. Building on recent work in the non-private and non-corrupted setting, where general preference models have been explored~\cite{munos2023nash, swamy2024minimaximalist}, it is natural to pose the next question:

\vspace{-2mm}
\begin{center} \noindent\fbox{ \parbox{0.95\linewidth}{ \textbf{Q4.} For a general preference model, can we still achieve privacy protection and robustness simultaneously?}} \end{center}
\vspace{-2mm}

\textbf{Contribution 4.} We answer this question affirmatively by demonstrating that an iterative version of \schipo provides the first set of results for private and robust alignment under a general preference model, achieving guarantees analogous to the results of iterative \chipo~\cite{huang2024correcting}

Finally, on the technical side, it is often desirable to have a clean and unified analysis across different settings, which in our case includes privacy (local or central models), corruption, as well as BT and general preference models.

\vspace{-2mm}
\begin{center} \noindent\fbox{ \parbox{0.95\linewidth}{ \textbf{Q5.} Can we have a unified analysis of \schipo? }} \end{center}
\vspace{-2mm}

\textbf{Contribution 5.} We answer this question affirmatively by establishing all of our theoretical results through a key common analytical tool: new generalization error bounds for least-square regression under privacy constraints and corruption. Given the widespread use of least-square regression oracles in RL~\cite{agarwal2019reinforcement}, we believe these results could be of independent interest.

In the interest of space, we relegate the discussion on further related work to Appendix~\ref{app:related}.

\section{Preliminaries}

\subsection{Offline Alignment}
In the offline alignment problem, there exists a pre-collected preference dataset $\cD_{\pref}=\{(x_i,a_i^0,a_i^1,y_i)\}_{i=1}^n$, where each context/prompt $x_i$ is i.i.d. sampled from a distribution $\rho$, and two responses $a_i^0$ and $a_i^1$ are i.i.d sampled from a reference policy $\piref$, i.e., $a_i^0 \sim \piref(\cdot \mid x_i)$ and $a_i^1 \sim \piref(\cdot \mid x_i)$, and finally the preference label $y_i \in \{0,1\}$ is generated according to some probability distribution, i.e., $y_i \sim \mathrm{Ber}(\cP^{\star}(a_i^1 \succ a_i^0 \mid x_i))$, where $\cP^{\star}(a_i^1 \succ a_i^0 \mid x_i) \in [0,1]$ is the probability that given $x_i$, $a_i^1$ is preferred over $a_i^0$ and $\mathrm{Ber}(\cdot)$ denotes a Bernoulli distribution. Without loss of generality, we assume that $\rho(x)>0$ for all $x$ and $\piref(a \mid x)>0$ for all $x$ and $a$.
Depending on the modeling assumption of the preference probability $\cP^{\star}(a_i^1 \succ a_i^0 \mid x_i)$, the (offline) alignment is often categorized into the following two settings.

\textbf{Bradley-Terry (BT) preference model \cite{bradley1952rank}.} In this setting, there exists an unknown true reward function $r^{\star}:\cX \times \cA \to [0, R_{\max}]$ that induces the preference probability as follows
\begin{align*}
    \cP^{\star}(a_i^1 \succ a_i^0 \mid x_i) = \frac{\exp(r^{\star}(x_i, a_i^1))}{\exp(r^{\star}(x_i, a_i^1)) + \exp(r^{\star}(x_i, a_i^0))}.
\end{align*}
With the preference dataset $\cD_{\pref}$, the goal under this setting is to learn a policy $\hat{\pi}$ that minimizes the suboptimality gap:
\begin{align}
\label{eq:SG}
    \mathsf{SG}(\hat{\pi}; \pi^{\star}):= J(\pi^{\star}) - J(\hat{\pi}),
\end{align}
where $J(\pi):= \mathbb{E}_{x\sim \rho, a \sim \pi(\cdot\mid x)}[r^{\star}(x,a)]$ and $\pi^{\star}$ is any comparator policy (e.g., it could be the optimal policy maximizing $J(\pi)$ or any other policy). For notation simplicity, we will abbreviate $\mathbb{E}_{\pi}[\cdot]:= \mathbb{E}_{x\sim \rho, a \sim \pi(\cdot|x)}[\cdot]$.

\textbf{General preference model~\cite{munos2023nash}.} In this setting, one directly works with a general preference model $\cP^{\star}(a_i^1 \succ a_i^0 \mid x_i)$ without the parametrization of a reward function as above. This general preference model has several advantages over the BT-preference model, e.g., it is better at capturing non-transitive preferences ($a \succ b$, $b \succ c$, $c \succ a$). Without the reward function, the solution concept now becomes \emph{minimax winner} (\emph{von Neumann winner})~\cite{munos2023nash,swamy2024minimaximalist,wang2023rlhf}, which is given by 
\begin{align*}
    \pi_{\mathsf{MW}}:= \argmax_{\pi \in \Pi}\min_{\pi' \in \Pi} \ \cP^{\star}(\pi \succ \pi'),
\end{align*}
where $\cP^{\star}(\pi \succ \pi'):= \mathbb{E}_{x \sim \rho} [\cP^{\star}(\pi(x) \succ \pi'(x) \mid x)]$ for a pair of policies $\pi, \pi'$ in a policy class $\Pi$. It is often more convenient to work with a scaled and shifted version of $\cP^{\star}(a^1 \succ a^0 \mid x)$ as $\ell^{\star}(x, a^1, a^0):= 2\cP^{\star}(a^1 \succ a^0 \mid x)-1$, which leads to an equivalent definition of minimax winner 
\begin{align}
\label{eq:loss}
    \pi_{\mathsf{MW}}:= \argmax_{\pi \in \Pi}\min_{\pi' \in \Pi}\  \ell^{\star}(\pi, \pi'),
\end{align}
where $\ell^{\star}(\pi, \pi'):= \mathbb{E}_{x\sim \rho, a^1\sim \pi(\cdot\mid x), a^0\sim \pi'(\cdot \mid x)}[\ell^{\star}(x, a^1, a^0)]$. Since this minimax winner can be viewed as a \emph{Nash equilibrium} of two-player constant-sum game, our goal in this setting is to minimize the duality gap
\begin{align*}
    \mathsf{DG}(\hat{\pi}):= \max_{\pi \in \Pi} \ell^{\star}(\pi, \hat{\pi}) - \min_{\pi \in \Pi} \ell^{\star}(\hat{\pi}, {\pi}).
\end{align*}

\subsection{DPO and $\chi$PO}
\textbf{DPO.} One of the most popular offline alignment algorithms is Direct Preference Optimization (\dpo)~\cite{rafailov2024direct}. Its popularity could be partially attributed to its success in eliminating the reward model learning process, achieved by a reparameterization of reward by the optimal policy of a $\mathrm{KL}$-regularized optimization objective. 
In particular, under the BT-preference model, given a preference dataset $\cD_{\pref}$ and a user-specified policy class $\Pi$, \dpo~solves 
\begin{align*}
    \hat{\pi}_{\mathsf{DPO}} =\argmax_{\pi \in \Pi} \!\!\sum_{(x, a_+, a_-)\in \cD_{\pref}}\!\!\log[\sigma( \beta h_{\mathsf{DPO}}(x, a_+, a_-))],
\end{align*}
where $h_{\mathsf{DPO}}(x, a_+, a_-):= \log\frac{\pi(a_+ \mid x)}{\pi_{\mathrm{ref}}(a_+ \mid x)} - \log\frac{\pi(a_- \mid x)}{\pi_{\mathrm{ref}}(a_- \mid x)}$, $\sigma(z) = \frac{1}{1 + e^{-z}}$ is the sigmoid function, and $\beta>0$ is some regularization parameter. Here, for any data point $(x, a^0, a^1, y)$ in $\cD_{\pref}$, we set $a_+ = a^y$ (the preferred one) and $a_- = a^{1-y}$ (the non-preferred one).

\textbf{$\chi$PO.} To address the inherent overoptimization issue in \dpo,~\citet{huang2024correcting} recently proposed a simple variant of \dpo by introducing an additional $\chi^2$-regularization term, which leads to the following optimization\footnote{We ignore the clipping operation for the ease of presentation.}
\begin{align*}
    \hat{\pi}_{\mathsf{\chi PO}} =\argmax_{\pi \in \Pi} \!\!\sum_{(x, a_+, a_-)\in \cD_{\pref}}\!\!\log[\sigma( \beta h_{\mathsf{\chi PO}}(x, a_+, a_-))],
\end{align*}
where $h_{\mathsf{\chi PO}}(x, a_+, a_-)\!:=\! \phi\left(\frac{\pi(a_+ \mid x)}{\pi_{\mathrm{ref}}(a_+ \mid x)}\right) \!-\! \phi\left(\frac{\pi(a_- \mid x)}{\pi_{\mathrm{ref}}(a_- \mid x)}\right)$ and $\phi(u):= u + \log u$. Compared to \dpo, there is an additional linear term in $\phi(z)$ that introduces \emph{pessimism}~\cite{jin2021pessimism}, which enables a suboptimality gap that only depends on \emph{single policy concentrability}~\cite{rashidinejad2021bridging}. On the other hand, \dpo could only achieve a suboptimality gap in terms of \emph{all-policy concentrability coefficient}~\cite{chen2019information} due to the lack of pessimism. Moreover, \chipo can also be extended to handle the general preference model with a meaningful upper bound on the duality gap.  Given the stronger performance of \chipo, we will mainly focus on it when we consider robustness and privacy in offline alignment, as discussed below.

\subsection{Robustness and Privacy in Preference Data}
\textbf{Label corruption.} In practice, the preference label $y_i$ may not be sampled from the clean distribution $\mathrm{Ber}(\cP^{\star}(a_i^1 \succ a_i^0 \mid x_i))$. To characterize this, we borrow the classic \emph{Huber corruption}~model from robust statistics.
\begin{definition}[$\alpha$-Huber corruption \cite{huber1964robust}] 
    We consider the following $\alpha$-Huber corruption: each label is independently sampled from $(1-\alpha) G_i + \alpha B_i$, where $G_i$ is the clean distribution $\mathrm{Ber}(\cP^{\star}(a_i^1 \succ a_i^0 \mid x_i))$ and $B_i$ is some arbitrary unknown Bernoulli distribution. That is, with probability $\alpha \in [0,1/2]$, each label is sampled from some bad distribution.
\end{definition}

\textbf{Label privacy in the local model.} The preference label is often collected via human feedback, which could potentially reveal each person's private information, as discussed before. To this end, a strong privacy protection is to ensure \emph{Local Differential Privacy} (LDP) via a local randomizer. Given the binary data of the preference label, it is natural to consider the classic \emph{randomized response} mechanism.
\begin{definition}[Randomized response and $\epsilon$-LDP \cite{warner1965randomized}]
    Let $\epsilon > 0$ be the privacy parameter and $y \in \{0,1\}$ be the true label. The randomized response (\texttt{RR}) mechanism $\cR$ flips $y$ and outputs private $\widetilde{y}$ based on the following distribution
    \begin{align}
    \label{eq:RR}
        \prob{\widetilde{y} = y} = \frac{e^{\epsilon}}{1+e^{\epsilon}} \text{ and } \prob{\widetilde{y} \neq y} = \frac{1}{1+e^{\epsilon}}.
    \end{align}
    This can be easily shown to satisfy $\epsilon$-LDP, i.e., for any $y, y^{\prime}$ and any subset $S$ in the range of $\mathcal{R}$ such that
\[
\mathbb{P}[\mathcal{R}(y) \in S] \le e^{\varepsilon} \cdot \mathbb{P}\left[\mathcal{R}\left(y^{\prime}\right) \in S\right].
\]
\end{definition}

\textbf{Interplay between corruption and LDP.} In practice, corruption and LDP protection can exist together, which motivates us to consider their interplay in the following settings.
\begin{definition}[$\mathsf{CTL}$ and $\mathsf{LTC}$]
\label{def:ctlandltc}
Given a raw preference dataset $\cD_{\pref} = \{(x_i, a_i^0, a_i^1, y_i)\}_{i=1}^n$ and two parameters $\alpha \in [0,1/2]$, $\epsilon > 0$, we consider the following two settings that differ in the order of corruption and label privacy protection in the local model:

\textbf{Corruption-then-LDP ($\mathsf{CTL}$).} The raw label $y_i$ is first corrupted by the $\alpha$-Huber model, which is then further privatized by $\epsilon$-LDP \texttt{RR} mechanism, leading to the final preference dataset given by $\widetilde{\cD}_{\pref} = \{(x_i, a_i^0, a_i^1, z_i)\}_{i=1}^n$.

\textbf{LDP-then-Corruption ($\mathsf{LTC}$).} The raw label $y_i$ is first privatized by $\epsilon$-LDP \texttt{RR} mechanism, which is then further corrupted by the $\alpha$-Huber model, leading to the final preference dataset given by $\widetilde{\cD}_{\pref} = \{(x_i, a_i^0, a_i^1, z_i)\}_{i=1}^n$.
\end{definition}

One of our goals is to study whether there exists a separation between the two settings, implying the order of corruption and LDP matters.

\begin{remark}
    The two settings naturally include corruption-only and privacy-only as special cases by setting $\epsilon = \infty$ and $\alpha = 0$, respectively. Moreover, it is easy to see that, combining the results of \ctl and \ltc directly gives us the result for an even practical setting where corruption happens both before and after LDP.
\end{remark}

\textbf{Differential privacy in the central model.} We will also consider the standard DP definition, which is defined in the central model where the learner has access to the raw data and needs to ensure a similar output on two neighboring datasets.
\begin{definition}[$(\epsilon,\delta)$-DP \cite{dwork2006calibrating}]
    Let $\epsilon > 0$ and $\delta \in [0,1]$, and $\cA$ be a given offline alignment algorithm. We say $\cA$ satisfies $\epsilon$-DP if for any measurable set $S$ in the range of $\cA$
    \begin{align*}
\mathbb{P}[\mathcal{A}(\cD_{\pref}) \in S] \le e^{\varepsilon} \cdot \mathbb{P}\left[\mathcal{A}\left(\cD'_{\pref}\right) \in S\right] + \delta,
    \end{align*}
    holds for any pair of $(\cD_{\pref}, \cD'_{\pref})$ that only differs in one sample $(x_i, a_i^0, a_i^1, y_i)$ for some $i \in [n]$. If $\delta = 0$, we simply write $\epsilon$-DP (i.e., pure DP).
\end{definition}

Here, we not only protect the preference label, but also the prompt and responses. As before, we would also like to study the interplay between corruption the central DP. In contrast to the local model, here the label corruption can only happen before the central privacy protection.
\begin{definition} [Corruption and DP ($\mathsf{cDP}$)]
\label{def:cDP}
    Given a raw preference dataset $\cD_{\pref} = \{(x_i, a_i^0, a_i^1, y_i)\}_{i=1}^n$ and two parameters $\alpha \in [0,1/2]$, $\epsilon > 0$, we consider the following interplay: each label $y_i$ is first corrupted by $\alpha$-Huber model, resulting in $\bar{\cD}_{\pref} = \{(x_i, a_i^0, a_i^1, \bar{y}_i)\}_{i=1}^n$. Then, the learner employs an algorithm $\cA$ that is $\epsilon$-DP with respect to $\bar{\cD}_{\pref}$.
\end{definition}
\begin{remark}
    As before, $\mathsf{cDP}$ recovers privacy-only and corruption-only settings by setting $\alpha=0$ and $\epsilon = \infty$, respectively.  
\end{remark}

\section{Bradley-Terry Preference Model}
\label{sec:BT}

In this section, we study offline alignment in the BT-preference model under privacy constraints and corruption. We first focus on the interplay between corruption and the label LDP (i.e., $\mathsf{CTL}$ and $\mathsf{LTC}$) and then turn to the setting of central DP, i.e., $\mathsf{cDP}$. 

\subsection{Local Model}
Our proposed algorithm, \texttt{Square}\chipo in Algorithm~\ref{alg:SchiPO}, is the same for both \ctl and \ltc, i.e., adaptive. The key modification compared with \chipo is to use a square loss instead of the log loss, plus an additional $c(\epsilon)$ factor for the private case. We will dive into the intuition about the choice of our loss function in the sequel. Before that, we remark that the clipping $\operatorname{clip}_R(u)= \max\{\min\{u, R\}, -R\}$ with $R = 2R_{\max}$ is adopted in \chipo as well, mainly used for a slightly tighter theoretical bound.
\begin{algorithm}[!t]
\caption{\texttt{Square}\chipo  for \ctl and \ltc}
\label{alg:SchiPO}
\begin{algorithmic}[1]
    \STATE \textbf{Input:} Locally private and corrupted preference dataset $\widetilde{\mathcal{D}}_{\pref}=\{\left(x_i, a_i^{0}, a_i^{1},z_i\right)\}_{i=1}^n$ under \ctl and \ltc, privacy parameter $\epsilon >0$, regularization coefficient $\beta >0$, reference policy $\pi_{\mathsf{ref}}$
    \STATE Define
\begin{align}
    \phi(u)&:=u+\log u \label{eq:phi}\\
    \!\!h_{\mathsf{\chi PO},i}&\!:=\! \phi\left(\frac{\pi(a_i^1 \mid x_i)}{\pi_{\mathsf{ref}}(a_i^1 \mid x_i)}\right) \!-\! \phi\left(\frac{\pi(a_i^0 \mid x_i)}{\pi_{\mathsf{ref}}(a_i^0 \mid x_i)}\right)\!\!\label{eq:hchipo}
\end{align}
\STATE Optimize the following objective:
$$
\!\!\!\widehat{\pi} \!\leftarrow \!\underset{\pi \in \Pi}{\operatorname{argmin}}\!\! \sum_{i \in [n]}  \left[2\sigma\left(\operatorname{clip}_{2 R_{\max }}\left[\beta h_{\mathsf{\chi PO},i}\right]\right)\!-\!1-\!c(\epsilon)\bar{z}_i\right]^2\!\!\!,
$$
where $c(\epsilon):= \frac{e^{\epsilon}+1}{e^{\epsilon}-1}$ and $\bar{z}_i = 2z_i - 1$
    \STATE \textbf{Output:} $\hat{\pi}$
\end{algorithmic}
\end{algorithm}
\vspace{-0.1cm}
\subsubsection{Intuition behind \texttt{Square}\chipo}
We now discuss our new loss function in \texttt{Square}\chipo, highlighting the intuition on how it helps to handle corruption and privacy protection. It is worth noting that our new loss function could be of its own interest even in the standard scenario, i.e., non-private and non-corrupted cases, with $\dpo$-type (rather than $\chipo$-type) reparameterization. 

\textbf{1. Square loss over probability.} Without privacy protection ($c(\epsilon)=1$), our new loss function essentially reduces to 
\begin{align}
\label{eq:Brier}
    \sum_{i \in [n]} ( p_i(\pi) - z_i)^2,
\end{align}
where we define $p_i(\pi):= \sigma\left(\operatorname{clip}_{2 R_{\max }}\left[\beta h_{\mathsf{\chi PO},i}\right]\right)$, while \dpo and \chipo essentially adopts the standard log-loss, i.e., 
\begin{align}
\label{eq:log-loss}
   - z_i \log p_i(\pi) - (1-z_i)\log (1-p_i(\pi)).
\end{align}
In fact, the loss in~\eqref{eq:Brier} is often referred to as \emph{Brier score}~\cite{brier1950verification} in probabilistic predictions. One direct observation here is that the Brier score is always upper bounded by $1$ while the log-loss can be unbounded, which implies that label corruption under log-loss may have a larger impact than that under the Brier score.

\textbf{2. Converting to $\pm 1$ with $c(\epsilon)$ scaling.} Instead of working with $z_i \in \{0,1\}$, we convert it to $\bar{z}_i = 2z_i - 1 \in \{1,-1\}$ and we similarly update the probability part. There are two main reasons for this: (i) From~\eqref{eq:RR} of \texttt{RR}, we can easily see that the private mean (under $\pm 1$) is $1/c(\epsilon)$ of the true mean (probability). This implies that the $c(\epsilon)$ factor in front of the private data leads to an unbiased estimate of the true probability, which essentially follows from the same intuition as in private mean estimation under \texttt{RR}, since the empirical average mean estimator can also be written as the solution to a square loss; (ii) Recall that for the general preference model, it often works with $\pm 1$ (cf.~\eqref{eq:loss}). As we will see later, this conversion allows us to essentially employ the same technique to analyze both BT-preference and general preference models, altogether.

\begin{remark}
    We mention in passing that many alignment algorithms draw inspiration from binary classification for their loss functions, in the non-private non-corrupted cases. For instance, in addition to log-loss in \dpo and \chipo, \texttt{SLiC}~\cite{zhao2023slic} leverages the hinge loss while \texttt{IPO}~\cite{azar2024general} adopts the standard square loss. The key conceptual difference between our square loss and that of \texttt{IPO} lies in the fact that the latter takes the square over the raw log-ratio (i.e., implicit reward) while ours is a square over probability (i.e., an additional sigmoid step is applied). More recently,~\citet{tang2024generalized} proposed a family of loss functions for alignment based on standard supervised learning, including \emph{exponential loss}, \emph{truncated quadratic loss}, and \emph{savage loss}. To the best of our knowledge, our \texttt{Square}\chipo is the first one that proposes to use the Brier score as the loss. In the next section, we will demonstrate its strong theoretical guarantees.
\end{remark}

\subsubsection{Theoretical Guarantees}
In this section, our aim is to establish the suboptimality gap (cf.~\eqref{eq:SG}) of \texttt{Square}\chipo (Algorithm~\ref{alg:SchiPO}),  under both \ctl and \ltc, without knowledge of the setting in advance. 

We start with the same assumptions as in \chipo~\cite{huang2024correcting}, i.e., policy realizability and bounded range. 
\begin{assumption}[Policy realizability]
\label{ass:realizability}
    Fix $\beta >0$. The policy class $\Pi$ satisfies $\pi_{\beta}^{\star} \in \Pi$, where $\pi_{\beta}^{\star}$ is the optimal policy of the following mixed $\chi^2$-regularized objective:
    \begin{align*}
    \!J_{\beta}^{\chi_{\mathsf{mix}}}(\pi)\!:=\!\mathbb{E}_\pi[r^{\star}(x, a)]\!-\!\beta \cdot [D_{\chi^2}(\pi \| \pi_{\mathsf{ref}})\!+\!  D_{\mathsf{KL}}(\pi \| \pi_{\mathsf{ref}})].
\end{align*}
\end{assumption}
The $J_{\beta}^{\chi_{\mathsf{mix}}}(\pi)$ in \chipo mixes $\chi^2$-regularization with the standard $\mathrm{KL}$-regularization in \dpo, which in turn leads to the new reward reparameterization using optimal solution $\pi_{\beta}^{\star}$ :
\begin{align*}
    r^{\star}(x, a)=\beta \phi\left(\frac{\pi_{\beta}^{\star}(a|x)}{\pi_{\mathsf{ref}}(a|x)}\right)+Z_{\beta, r^{\star}}(x),
\end{align*}
where we recall that $\phi(u) = u + \log u$ and $Z_{\beta, r^{\star}}(x)$ is some action-independent normalization term. Thus, Assumption~\ref{ass:realizability} essentially implies the implicit reward realizability under the above parameterization.

As in \chipo~\cite{huang2024correcting}, the next assumption asserts that the \emph{implicit reward difference} under any policy in $\Pi$ is upper bounded by some constant.
\begin{assumption}[Bounded implicit reward difference]
\label{ass:bound-rew}
    For a parameter $V_{\max} \geq R_{\max}$, it holds that for all $\pi \in \Pi$, $x \in \mathcal{X}$, and $a, b \in \mathcal{A}$,
$$
\left|\beta \phi\left(\frac{\pi(a \mid x)}{\pi_{\mathsf{ref}}(a \mid x)}\right)-\beta \phi\left(\frac{\pi(b \mid x)}{\pi_{\mathsf{ref}}(b \mid x)}\right)\right| \leq V_{\max}.
$$
\end{assumption}
Finally, we will measure the theoretical performance using the same type of \emph{single-policy concentrability} as in \chipo. 
\begin{definition}[$L_1$-Concentrability]
    The single-policy $L_1$-concentrability coefficient for a policy $\pi$ is given by
    \begin{align*}
        \mathcal{C}^\pi:=\mathbb{E}_\pi\left[\frac{\pi(a|x)}{\pi_{\mathsf{ref}}(a|x)}\right],
    \end{align*}
    where we recall that $\mathbb{E}_{\pi}[\cdot]:= \mathbb{E}_{x\sim \rho, a \sim \pi(\cdot|x)}[\cdot]$.
\end{definition}
By a direct calculation, one can see $\mathcal{C}^\pi=2  D_{\chi^2}(\pi \| \pi_{\mathsf{ref}})+1$, which is extremely useful in the analysis of both \chipo and our next main result on Algorithm~\ref{alg:SchiPO}.

\begin{theorem}
\label{thm:BT-local}
    For any given comparator policy $\pi^{\star}$, there exists a proper choice of $\beta >0$ such that when Assumptions~\ref{ass:realizability} and~\ref{ass:bound-rew} hold, with probability at least $1-\zeta$, the output of Algorithm~\ref{alg:SchiPO} satisfies the following suboptimality gaps under \ctl and \ltc:
    \begin{align*}
        \mathsf{SG}_{\mathsf{CTL}}(\hat{\pi};\pi^{\star}) &\!\lesssim \!\kappa(\pi^{\star}) \left(c(\epsilon) \sqrt{\frac{ \log(|\Pi|/\zeta)}{n}} + \sqrt{\alpha}\right),\\
        \mathsf{SG}_{\mathsf{LTC}}(\hat{\pi};\pi^{\star})& \!\lesssim\! \kappa(\pi^{\star})\left(c(\epsilon) \sqrt{\frac{ \log(|\Pi|/\zeta)}{n}} \!+\! \sqrt{\alpha \cdot c(\epsilon)}\right),
    \end{align*}
    where $a \lesssim b$ as shorthand for $a=\mathcal{O}(b)$, $c(\epsilon)= \frac{e^{\epsilon}+1}{e^{\epsilon}-1}$ and $\kappa(\pi^{\star}):= e^{2R_{\max}}\cdot\frac{V_{\max}}{R_{\max}}\sqrt{\cC^{\pi^{\star}}}$ is the single-policy concentrability related term.
\end{theorem}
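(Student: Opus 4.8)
The plan is to reduce the analysis of \texttt{Square}\chipo\ to a general-purpose generalization bound for least-square regression under corruption and LDP noise, applied to the regression target $p^{\star}_i := \cP^{\star}(a_i^1 \succ a_i^0 \mid x_i)$. First I would observe that the algorithm's objective is exactly an (scaled) empirical square loss whose population minimizer, thanks to the $c(\epsilon)$ correction and the $\pm 1$ re-scaling, is the true preference probability: indeed, under \texttt{RR} the private label $\bar z_i$ has conditional mean $(2p^{\star}_i - 1)/c(\epsilon)$ in the privacy-only case, and under the $\alpha$-Huber model the mean is shifted to $(1-\alpha)p^{\star}_i + \alpha q_i$ for an arbitrary $q_i$; composing the two (in either order, which is where \ctl\ vs.\ \ltc\ differ) gives a conditional mean that equals $2p^{\star}_i-1$ up to a bias whose magnitude I would bound by $O(\alpha)$ in the \ctl\ case and $O(\alpha \, c(\epsilon))$ in the \ltc\ case — this bias asymmetry is precisely the source of the claimed separation, because privatizing first amplifies the adversary's per-sample leverage by $c(\epsilon)$.

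Next I would invoke the paper's promised least-square regression generalization bound (Contribution 5): for the function class $\{\,2\sigma(\operatorname{clip}_{2R_{\max}}[\beta h_{\mathsf{\chi PO},i}(\pi)]) : \pi \in \Pi\,\}$, whose realizability is guaranteed by Assumption~\ref{ass:realizability} (the optimal $\pi^{\star}_\beta$ reproduces $r^{\star}$ through $\phi$, hence reproduces $p^{\star}$), the output $\hat\pi$ satisfies an in-expectation squared-error bound of the form $\mathbb{E}_{x, a^0, a^1}[(p_i(\hat\pi) - p^{\star}_i)^2] \lesssim c(\epsilon)^2 \frac{\log(|\Pi|/\zeta)}{n} + (\text{corruption bias})^2$, where the $c(\epsilon)^2$ enters through the variance of the re-scaled private label and the bias term is $\alpha$ or $\alpha c(\epsilon)$ as computed above. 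Taking square roots turns this into an $L_2$-type guarantee with the additive $c(\epsilon)\sqrt{\log(|\Pi|/\zeta)/n} + \sqrt{\alpha}$ (resp.\ $\sqrt{\alpha c(\epsilon)}$) shape already visible in the theorem.

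The final and most delicate step is to convert this prediction-error bound on probabilities into a suboptimality-gap bound on the reward objective $J$. Here I would follow the \chipo\ analysis: a small squared error on $2\sigma(\beta\, \mathrm{clip}[h_{\mathsf{\chi PO}}(\hat\pi)]) - 2\sigma(\beta\, \mathrm{clip}[h_{\mathsf{\chi PO}}(\pi^{\star}_\beta)])$ translates, via the Lipschitz/anti-Lipschitz properties of $\sigma$ on the clipped range (this is where the $e^{2R_{\max}}$ and $V_{\max}/R_{\max}$ factors in $\kappa(\pi^{\star})$ come from), into a bound on the implicit-reward discrepancy; then the $\chi^2$-regularization's change-of-measure identity $\mathcal{C}^\pi = 2D_{\chi^2}(\pi\|\pi_{\mathsf{ref}})+1$ lets me charge the transfer from the $\pi_{\mathsf{ref}}$-sampled data distribution to the comparator $\pi^{\star}$ at a cost of only $\sqrt{\mathcal{C}^{\pi^{\star}}}$ (single-policy concentrability), and finally optimizing over $\beta$ balances the regularization bias against the statistical error to yield the stated rate. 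The main obstacle is this last reduction: making the sigmoid-inversion and the pessimism/regularization bookkeeping go through \emph{with a bias term present} (the corruption-induced shift is not mean-zero), so I expect to need a careful argument that the biased regression still controls $\mathbb{E}_{\pi^{\star}_\beta}[\,\cdot\,] - \mathbb{E}_{\hat\pi}[\,\cdot\,]$ up to the $\sqrt\alpha$-type slack, rather than the cleaner unbiased argument in \chipo.
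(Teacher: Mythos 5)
Your proposal is correct and follows essentially the same route as the paper's own proof: the per-sample bias computation ($O(\alpha)$ under \ctl versus $O(\alpha\, c(\epsilon))$ under \ltc, because corrupting \emph{after} privatization lets each bad sample be amplified by the $c(\epsilon)$ rescaling) is exactly the corruption cross-term in the paper's least-squares generalization lemma; the realizability check for the class $\{2\sigma(\operatorname{clip}_{2R_{\max}}[\beta h_{\mathsf{\chi PO}}])-1\}$ and the sigmoid anti-Lipschitz inversion (the source of the $e^{2R_{\max}}$ factor) match its bound on $\err_{\mathsf{stat}}^2$; and the final conversion via $\cC^{\pi}=2D_{\chi^2}(\pi\|\pi_{\mathsf{ref}})+1$ and the $\beta$-tuning is the meta-theorem inherited verbatim from \chipo. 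Two minor notes: your additive term ``$(\text{corruption bias})^2$'' is internally inconsistent with the $\sqrt{\alpha}$ you then extract --- the paper's crude cross-term bound yields the \emph{first} power $\alpha$ inside the squared error (whence $\sqrt{\alpha}$ in the gap, matching the theorem), while literally squaring your $O(\alpha)$ bias would give the stronger rate $\alpha$ that the paper only conjectures --- and the difficulty you anticipate about redoing the pessimism bookkeeping with a biased regression target does not arise, since the bias is absorbed entirely into $\err_{\mathsf{stat}}^2$ and the downstream \chipo argument is agnostic to where that error comes from.
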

\begin{remark}
    Thanks to the use of \texttt{RR} in \ctl and \ltc, our algorithm is $\epsilon$-LDP.
    Setting $\epsilon = \infty$ and $\alpha = 0$ in the above utility bounds, leads to the same bound as in \chipo. Moreover, as a by-product, the above theorem also directly gives results for privacy-only and corruption-only settings. Furthermore, it can be easily leveraged to establish bounds for the setting where corruption happens both before and after local privacy with a simple summation of the two bounds above. We stress that, as in~\citet{huang2024correcting}, we consider a finite policy class $\Pi$ for the ease of presentation. The extension to an infinite function class can be easily achieved via the standard covering number argument. For example, for a linear reward model in $\Real^d$ (or equivalently, a log-linear policy class), $\log |\Pi|$ will roughly be $\widetilde{\mathcal{O}}(d)$.
\end{remark}
With the above theorem, several important observations and remarks are in order.

\textbf{Interplay between local privacy and corruption.} One can see that under \ctl, the impact of local privacy parameter $\epsilon$ (i.e., the first term) and corruption parameter $\alpha$ (i.e., the second term) is \emph{separable} (additive), while there exists a multiplicative term in \ltc, which leads to an additional $\sqrt{c(\epsilon)} \geq 1$ factor.  While these are only upper bound results, we tend to believe that the different interplay between local privacy and corruption (i.e., additive vs. multiplicative) indeed exists, especially given the recent similar tight result in mean estimation~\cite{zhoulocally}. 

\textbf{Comparison with prior private alignment.} To the best of our knowledge,~\citet{chowdhury2024differentially} is the only related work that studies label privacy protection in offline alignment. However, it considers the standard RL-based approach where a reward model is explicitly learned before the policy optimization, rather than our RL-free direct optimization method. More importantly, it only considers the linear reward setting while ours is the first one that establishes formal guarantees for the general function approximation settings with the same (optimal) privacy cost of $c(\epsilon)$ and a similar single-policy concentrability dependence. Finally, we refer readers to Section~\ref{sec:dis} for comparisons with one concurrent work~\cite{zhou2025unified} on private alignment.  

\textbf{Comparison with prior robust alignment.} To the best of our knowledge, only \citet{chowdhury2024provably} provides a formal theoretical bound on the suboptimality gap of a robust variant of \dpo under a particular type of label corruption. Specifically, it considers the so-called \emph{random-flipping} corruption (i.e., with some \emph{known} probability, the true label is flipped). An astute reader may already observe that this corruption model is weaker than our Huber corruption, and moreover, it is essentially equivalent to label privacy noise under \texttt{RR} after a simple reparameterization. Thus, it is in fact more fair to compare it with Theorem~\ref{thm:BT-local} under $\alpha=0$. In this context, our main result has two significant improvements over~\citet{chowdhury2024provably}: (i) 
Even under the linear model,~\citet{chowdhury2024provably} only archives a $\mathcal{O}(1/n^{1/4})$ rate with worse $\emph{all-policy concentrability}$ dependence while ours is the optimal $\mathcal{O}(1/n^{1/2})$ rate with $\emph{single-policy concentrability}$; (ii) For the general function approximation setting,~\citet{chowdhury2024provably} fails to achieve a vanish suboptimality gap as $n \to \infty$ while ours maintains the optimal $\mathcal{O}(1/n^{1/2})$ rate. Another related work is~\citet{mandal2024corruption}, which only considers RL-based alignment with linear function approximations under adversary corruption of both prompt (responses) and labels. In contrast, our main focus is RL-free alignment for general function approximations while under label-corruption only. Finally, we refer readers to Section~\ref{sec:dis} for comparisons with one concurrent work~\cite{zhou2025unified} on robust alignment.  

\subsection{Central Model}
We now turn to privacy protection in the central model where both the prompt (responses) and labels are sensitive information (cf. $\mathsf{cDP}$ in Definition~\ref{def:cDP}).

\begin{algorithm}[!t]
\caption{\texttt{Square}\chipo  for $\mathsf{cDP}$}
\label{alg:SchiPO-cDP}
\begin{algorithmic}[1]
    \STATE \textbf{Input:} Possibly label corrupted preference dataset $\bar{\mathcal{D}}_{\pref}=\{\left(x_i, a_i^{0}, a_i^{1},\bar{y}_i\right)\}_{i=1}^n$, privacy parameter $\epsilon>0$, regularization coefficient $\beta >0$, reference policy $\pi_{\mathsf{ref}}$,  $h_{\mathsf{\chi PO},i}$ in~\eqref{eq:hchipo}
\STATE Define 
\begin{align*}
    \!\!\!L(\pi; \bar{\mathcal{D}}_{\pref})\!:=\! \sum_{i \in [n]}  \left[2\sigma\left(\operatorname{clip}_{2 R_{\max }}\left[\beta h_{\mathsf{\chi PO},i}\right]\right)\!-\!1-\!\bar{y}_i'\right]^2,
\end{align*}
where $\bar{y}_i' = 2\bar{y}_i - 1 \in \{1,-1\}$
\STATE Sample a policy $\hat{\pi}$ from $\Pi$ via the following distribution
\[
P(\pi)\propto \exp\left(-\frac{\epsilon}{8} \cdot L(\pi; \bar{\mathcal{D}}_{\mathrm{pref}})\right)
\]
    \STATE \textbf{Output:} $\hat{\pi}$
\end{algorithmic}
\end{algorithm}
\vspace{-0.1cm}

Our proposed algorithm is presented in Algorithm~\ref{alg:SchiPO-cDP}, which essentially applies the \emph{exponential mechanism}~\cite{mcsherry2007mechanism} with our square loss as the score function. The boundedness of our square loss (in contrast to the unboundedness of log-loss) plays a key role in balancing privacy and utility thanks to its bounded \emph{sensitivity}, i.e., changing any single sample at most modify $L(\pi; \bar{\cD}_{\pref})$ by $4$, which leads to our sampling distribution in Algorithm~\ref{alg:SchiPO-cDP}. 

We now proceed to present the privacy and utility guarantees of Algorithm~\ref{alg:SchiPO-cDP}. 
\begin{theorem}\label{thm:BT_central}
   Let $\epsilon >0$,  Algorithm~\ref{alg:SchiPO-cDP} satisfies $\epsilon$-DP. For any given comparator policy $\pi^{\star}$, there exists a proper choice of $\beta >0$ such that when Assumptions~\ref{ass:realizability} and~\ref{ass:bound-rew} hold, with probability at least $1-\zeta$, the output of Algorithm~\ref{alg:SchiPO-cDP} satisfies the following suboptimality gap under \cdp 
     \begin{align*}
        \!\!\mathsf{SG}_{\mathsf{cDP}}(\hat{\pi};\pi^{\star}) \!\lesssim \!\kappa(\pi^{\star})\!\left( \left(1+\frac{1}{\sqrt{\epsilon}}\right)\sqrt{\frac{ \log(|\Pi|/\zeta)}{n}} 
        \!+\! \sqrt{\alpha}\right),
    \end{align*}
    where $\kappa(\pi^{\star})= e^{2R_{\max}}\cdot\frac{V_{\max}}{R_{\max}}\sqrt{\cC^{\pi^{\star}}}$ is the single-policy concentrability related term.
\end{theorem}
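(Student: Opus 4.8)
The statement has two parts --- that Algorithm~\ref{alg:SchiPO-cDP} is $\epsilon$-DP, and the suboptimality bound under $\mathsf{cDP}$ --- and I would handle them separately. For privacy, the plan is to observe that $L(\cdot;\bar{\mathcal{D}}_{\pref})$ has small $\ell_1$-sensitivity: replacing one datapoint changes only one summand, and since $2\sigma(\operatorname{clip}_{2R_{\max}}[\cdot])-1\in(-1,1)$ while $\bar{y}_i'\in\{-1,1\}$, that summand lies in $[0,4)$, so the sensitivity is at most $4$. Hence the rule $P(\pi)\propto\exp(-\tfrac{\epsilon}{8}L(\pi;\bar{\mathcal{D}}_{\pref}))$ is exactly the exponential mechanism \cite{mcsherry2007mechanism} with score $-L$, sensitivity $4$, at privacy level $\epsilon$, giving $\epsilon$-DP; the $\alpha$-Huber corruption, occurring before the mechanism under $\mathsf{cDP}$, is irrelevant to this step.

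For the utility bound I would use the same two-stage pipeline that underlies Theorem~\ref{thm:BT-local}. First, the utility guarantee of the exponential mechanism gives, with probability $\ge 1-\zeta/3$ over its internal randomness, $L(\hat\pi;\bar{\mathcal{D}}_{\pref})\le\min_{\pi\in\Pi}L(\pi;\bar{\mathcal{D}}_{\pref})+\tau$ with $\tau\lesssim\tfrac1\epsilon\log(|\Pi|/\zeta)$ --- i.e., $\hat\pi$ is an approximate square-loss ERM. Second, I would feed this into our least-squares regression generalization bound (the common tool behind all the theorems). Write $f_\pi(x,a^1,a^0):=2\sigma(\operatorname{clip}_{2R_{\max}}[\beta h_{\mathsf{\chi PO}}])-1$; by Assumption~\ref{ass:realizability} and the reward reparameterization, $\beta h_{\mathsf{\chi PO}}$ at $\pi_{\beta}^{\star}$ equals $r^{\star}(x,a^1)-r^{\star}(x,a^0)\in[-R_{\max},R_{\max}]$, so the clip is inactive there and $f_{\pi_{\beta}^{\star}}=2\cP^{\star}(a^1\succ a^0\mid x)-1=\ell^{\star}$, i.e., the class realizes the clean regression target. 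Under $\alpha$-Huber corruption the conditional mean of $\bar{y}_i'$ differs from $\ell^{\star}_i$ by at most $2\alpha$; combining (i) a Bernstein bound plus union over $\Pi$ on the mean-zero noise cross-term, (ii) the $\tau$-slack, and (iii) the crucial boundedness of the Brier loss --- which caps the joint contribution of the $\approx\alpha n$ corrupted terms (and of the terms swapped in the ERM comparison) at $O(\alpha)$ per sample rather than anything worse --- yields, with probability $\ge 1-2\zeta/3$,
\[
\frac1n\sum_{i\in[n]}\bigl(f_{\hat\pi}(x_i,a_i^1,a_i^0)-\ell^{\star}_i\bigr)^2\;\lesssim\;\Bigl(1+\tfrac1\epsilon\Bigr)\frac{\log(|\Pi|/\zeta)}{n}+\alpha .
\]

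The final stage is the $\chipo$ argument of \citet{huang2024correcting}, run with the square loss in place of the log loss. Since $\sigma$ is bi-Lipschitz on $[-2R_{\max},2R_{\max}]$ with derivative bounded below by $\Omega(e^{-2R_{\max}})$, the in-sample probability error above converts, at the price of an $e^{4R_{\max}}$ factor, into an in-sample error on the implicit-reward differences $\operatorname{clip}_{2R_{\max}}[\beta h_{\mathsf{\chi PO},i}]$ versus $r^{\star}(x_i,a_i^1)-r^{\star}(x_i,a_i^0)$; a uniform-convergence step (cost $\log|\Pi|/n$, absorbed) passes to the population expectation under $\rho\times\pi_{\mathsf{ref}}\times\pi_{\mathsf{ref}}$. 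Then, using $\cC^{\pi}=2D_{\chi^2}(\pi\|\pi_{\mathsf{ref}})+1$ for the change of measure from $\pi_{\mathsf{ref}}$ to $\pi^{\star}$ together with the $\chi^2$-regularization bias of the comparator, and tuning $\beta$ to balance the regularization bias against the estimation error, one obtains
\[
\mathsf{SG}_{\mathsf{cDP}}(\hat\pi;\pi^{\star})\;\lesssim\;e^{2R_{\max}}\tfrac{V_{\max}}{R_{\max}}\sqrt{\cC^{\pi^{\star}}}\;\sqrt{\Bigl(1+\tfrac1\epsilon\Bigr)\tfrac{\log(|\Pi|/\zeta)}{n}+\alpha}\,,
\]
and $\sqrt{u+v}\le\sqrt u+\sqrt v$ with $\sqrt{1+1/\epsilon}\le 1+1/\sqrt\epsilon$ gives the claimed bound; a union bound over the three $O(\zeta)$ events closes the argument.

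I expect the main obstacle to be this last stage: faithfully transplanting the $\chipo$ suboptimality analysis onto the Brier loss --- keeping track of the sigmoid's Lipschitz and inverse-Lipschitz constants on the clipped window (which is where the $e^{2R_{\max}}$ comes from), correctly turning an $L_2$ error on probabilities under $\pi_{\mathsf{ref}}\times\pi_{\mathsf{ref}}$ into a gap under $\pi^{\star}$ via single-policy concentrability, and choosing $\beta$. A secondary point requiring care is ensuring the corruption enters the regression bound only as an additive $O(\alpha)$ per-sample term, so that it surfaces as the benign $\sqrt\alpha$ after the final square root --- this is exactly where the boundedness of the Brier loss (unlike the log loss) is essential, mirroring the role it plays in the local-model analysis of Theorem~\ref{thm:BT-local}.
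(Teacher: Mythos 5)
Your proposal is correct and follows essentially the same route as the paper: the $\epsilon$-DP claim via the exponential mechanism with sensitivity $4$; its utility guarantee turning $\hat\pi$ into an approximate square-loss ERM with slack $\lesssim \log(|\Pi|/\zeta)/\epsilon$; a Freedman/Bernstein-type generalization bound in which the bounded Brier loss makes the Huber corruption contribute only an additive $O(\alpha)$; realizability of the clipped target at $\pi_\beta^\star$; the mean-value-theorem conversion from probabilities to reward differences costing $e^{4R_{\max}}$ on the squared error; and the \chipo meta-theorem with $\beta$ tuned to balance bias and estimation error. The only cosmetic difference is that you state the regression guarantee in-sample and then pass to the population, whereas the paper's Lemma~\ref{lem:gen} obtains the population bound directly from Freedman's inequality.
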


With this theorem in hand, several interesting and important observations are in order.

\textbf{Interplay between central DP and corruption.} One can first observe that as in \ctl, the cost of privacy and corruption is separable (i.e., additive). However, the privacy cost is smaller in the central model than that under the local model.  

\textbf{Comparison with prior alignment under central DP.} To the best of our knowledge, there are two concurrent works~\cite{chowdhury2024differentially,korkmazlearning} that studied RL-based alignment under central DP constraint in the context of a linear reward model in $\Real^d$. In particular, they both consider a \emph{weaker} approximate DP constraint (i.e., $\delta >0$) and establish a privacy cost of $\mathcal{O}\left(\frac{(d\log(1/\delta))^{1/4}}{\sqrt{n\epsilon}}\right)$. In contrast, our result can handle \emph{general function approximations} with a stronger pure DP guarantee. In fact, if one simply generalizes their approaches by following the non-private counterpart in~\citet{zhu2023principled} to tackle non-linear functions, it will lead to a strictly suboptimal non-vanishing suboptimality gap. Further, our result under a linear model reduces to a privacy cost of $\mathcal{O}\left(\frac{\sqrt{d}}{\sqrt{n\epsilon}}\right)$, which has a worse dependence on $d$ (due to the stronger pure DP) while getting rid of the additional $\log(1/\delta)$ factor (which is typically at least on the order of $\log n$).

\begin{remark}
   It should be clear that Algorithm~\ref{alg:SchiPO-cDP} is not a computationally efficient due to the sampling operation, especially for an infinite class $\Pi$. Hence, we view it as an information-theoretic result, which serves as an important theoretical benchmark for our next step in developing a computationally efficient algorithm. This is indeed a typical path in the private machine learning literature.
\end{remark}

\section{General Preference Model}
\label{sec:gpm}
In this section, we turn to the general preference model, which does not assume preference transitivity as in previous BT-preference model. We will demonstrate that our \schipo can be easily extended to this setting based on the self-play framework~\cite{swamy2024minimaximalist,gao2024rebel,rosset2024direct}. As already shown in \chipo~\cite{huang2024correcting} for the standard non-private non-corrupted setting, it is impossible to achieve a single-policy concentrability dependence in the sample complexity bound under the general preference model. Thus, we will aim to achieve a coverage dependence the same as in \chipo under the general preference model, which is somewhat in between single-policy and all-policy concentrability.

In the interest of space, our proposed algorithm for the general preference model under privacy and corruption is presented in Algorithm~\ref{alg:iter-schipo} in  appendix. It mainly consists of two key steps: (i) preference model estimation and (ii) policy optimization with self-play. Our modification compared to iterative \chipo in~\citet{huang2024correcting} only lies in the first step, since the labeled data set (which is our corruption and privacy protection target) is only used during the first step while the second step works with an unlabeled dataset $\cD_x$. 

\textbf{(i) Preference model estimation.} Depending on the local or central privacy model, we have two different ways of finding $\hat{\ell}$. For the local model, $\hat{\ell}$ is found via a modified least-square regression where an additional factor of $c(\epsilon)$ is applied in~\eqref{eq:local-general}, which will essentially reduce to the same loss as in~\citet{huang2024correcting} when $\epsilon = \infty$ (i.e., no privacy protection). We can now also observe that the loss function under the BT-preference model in Algorithm~\ref{alg:SchiPO} is simply a specific instantiation of~\eqref{eq:local-general} by plugging BT-preference probability (via sigmoid function) into $\ell(x_i, a_i^0, a_i^1)$. Similarly, for the central model, we again use the exponential mechanism to find $\hat{\ell}$, based on the square loss, which is also a generalization of the loss used in Algorithm~\ref{alg:SchiPO-cDP} under the BT-preference model.

\textbf{(ii) Policy optimization with self-play.} With the estimated preference model $\hat{\ell}$ in hand, we proceed to run policy optimization over a \emph{unlabeled} dataset via self-play, which means that $\hat{r}^t$ is constructed using the current policy $\pi^t$ (i.e., $b_t \sim \pi^t(x)$). With this $\hat{r}^t$, our algorithm (which is the same as in~\citet{huang2024correcting}) updates its policy by mirror descent~\cite{NemirovskijY83} with a mixed regularizer (i.e., $\chi^2$-regularizer and $\mathrm{KL}$-regularizer) over \emph{both} the current policy $\pi^t$ and $\pi_{\mathsf{ref}}$. This type of mirror descent can be rewritten using the same \chipo reparametrization as a regression over a reward difference, leading to the loss $\cL_t(\pi;\cD_x)$ in~\eqref{eq:loss-po} with the reparametrization function $f_{\pi, \pi'}^{\beta, \eta}(x, a, b)$ given by 
\begin{align}
\label{eq:f}
\left(1+\frac{1}{\eta}\right)  \beta \cdot h_{\chipo\!\!,\pi}(x,a,b) - \frac{\beta}{\eta} \cdot h_{\chipo\!\!,\pi'}(x,a,b),
\end{align}
where $h_{\chipo\!\!,\pi}(x,a,b):= \phi\left(\frac{\pi(a \mid x)}{\pi_{\mathsf{ref}}(a \mid x)}\right) \!-\! \phi\left(\frac{\pi(b \mid x)}{\pi_{\mathsf{ref}}(b \mid x)}\right)$ is essentially the same reparametrization used in the last section (cf.~\eqref{eq:hchipo}) with $\phi(u) = u+ \log u$ being the same as before. At a high level, this policy optimization step can be viewed as a combination of the techniques developed in~\citet{gao2024rebel} (i.e., regression over the reward difference with a reparametrization trick) and in~\citet{chang2024dataset} (i.e., regularized over both $\pi^t$ and $\pi_{\mathsf{ref}}$). We will provide more intuition on this step in the next section.

\subsection{Theoretical Guarantees}
In this section, we present our main theoretical result on Iterative \schipo in Algorithm~\ref{alg:iter-schipo}. First, we state the \emph{same} set of assumptions as in~\citet{huang2024correcting}.

\begin{assumption}[Preference function realizability]
\label{assum:A-main}
   The model class $\mathcal{L}$ satisfies $\ell^{\star} \in \mathcal{L}$ where $\ell^{\star}$ is the ground truth preference function.
\end{assumption}

The next assumption is about the policy realizability during each policy update step, which is analogous to Assumption~\ref{ass:realizability} in the BT-preference model.
\begin{assumption}[Policy realizability for general preferences]\label{assum:B-main}
    For any policy $\pi \in \Pi$ and $\ell \in \mathcal{L}$, the policy class $\Pi$ contains the minimizer of the following regularized optimization objective: $\forall x \in \mathcal{X}$
    \[
    \!\bar{\pi}(x;\!\ell,\!\pi) \!\!:=\!\! \argmax_{p \in \Delta(\mathcal{X})} \!\left\{ 
    \mathbb{E}_{a \sim p, b \sim \pi(x)}[\ell(x,\!a,\!b)] \!\!-\!\! \cR_x(p,\!\pi_{\mathsf{ref}},\!\pi) 
    \right\}\!,
    \]
    where the regularizer $\cR_x(p,\pi_{\mathsf{ref}}, \pi)$ is given by
    \begin{align*}
         \cR_x(p,\pi_{\mathsf{ref}}, \pi):= \beta D_{f_{\chi_{\text{mix}}}}(p \| \pi_{\text{ref}}(x)) + \frac{\beta}{\eta} B_x(p, \pi),
    \end{align*}
    with $ D_{f_{\chi_{\text{mix}}}}(p \| q):=D_{\chi^2}\left(p \| q\right)\!+\!  D_{\mathsf{KL}}\left(p \| q\right)$ and $B_x(p,q)$ being the Bregman divergence induced by the convex function $F(u):= D_{f_{\chi_{\text{mix}}}}(u \| \pi_{\mathsf{ref}})$, i.e., 
    \begin{align*}
        B_x(p,q) := F(p) -  F(q) - \inner{\nabla F(q)}{p-q}.
    \end{align*}
\end{assumption}

While it may seem to be complicated, we now pause briefly to provide further intuition on the above optimization by comparing it with $J_{\beta}^{\chi_{\mathsf{mix}}}$ in Assumption~\ref{ass:realizability}. We first note that the $\pi$ in $\bar{\pi}(x; \ell, \pi)$ will be $\pi^t$ in our algorithm. Thus, compared with $J_{\beta}^{\chi_{\mathsf{mix}}}$, the above optimization basically adds another regularization over $\pi^t$ via $B_x(p, \pi^t)$, which directly gives us the reparametrization function in~\eqref{eq:f}\footnote{Note that the last term in $B_x(p,q)$ will not contribute, since the gradient of it is independent of $p$.} with $\pi' = \pi^t$. 


Finally, analogous to Assumption~\ref{ass:bound-rew}, we assume that the implicit reward is bounded. 
\begin{assumption}[Bounded implicit reward difference for general preferences]\label{assum:C-main}
   For a parameter $V_{\max} \geq 2$, it holds that for all $\pi, \pi' \in \Pi$, $x \in \mathcal{X}$, and $a,b \in \mathcal{A}$,
   \[
   |f_{\pi,\pi'}^{\beta,\eta}(x,a,b)| \leq V_{\max}.
   \]
\end{assumption}

Our main guarantee for Algorithm~\ref{alg:iter-schipo} is as follows.
\begin{theorem}
\label{thm:gpm}
    Let $\epsilon >0$, Algorithm~\ref{alg:iter-schipo} satisfies $\epsilon$-LDP or $\epsilon$-DP, respectively. Let $\mathsf{subopt}(\hat{\pi}, C) := \max_{\pi \in \Pi} \ell^*(\pi, \hat{\pi}) - \max_{\pi \in \Pi_C} \ell^*(\pi, \hat{\pi})$ and $\Pi_C := \{\pi : \max_{x \in \mathcal{X}} D_{\chi^2}(\pi(x) \parallel \pi_{\mathsf{ref}}(x)) \leq C \}$. Then, for any $\zeta \in (0,1]$ and each setting of \ctl, \ltc and \cdp\!, under Assumptions~\ref{assum:A-main},~\ref{assum:B-main} and~\ref{assum:C-main}, there exists corresponding proper choices of $T, \beta, \eta$ such that with probability $1-\zeta$, the following bounds hold:
\begin{align*}
\mathsf{DG}(\hat{\pi}) &\lesssim \min_{C \geq 1} \left\{ \mathsf{subopt}(\hat{\pi}, C) + C \cdot \mathcal{B}\right\},
\end{align*}
where $\cB \in \{\cB_{\mathsf{CTL}}, \cB_{\mathsf{LTC}}, \cB_{\mathsf{cDP}}\}$ are defined as
\begin{align*}
    \mathcal{B}_{\mathsf{CTL}} &\!:=\! \left( \cV_m +  c(\epsilon) \sqrt{\frac{\log(|\cL||\Pi|/\delta)}{n}} + \sqrt{\alpha \log\frac{|\Pi|}{\delta}} \right)\!,\\
    \mathcal{B}_{\mathsf{LTC}} &\!:=\! \left( \cV_m +  c(\epsilon) \sqrt{\frac{\log(|\cL||\Pi|/\delta)}{n}} \!\!+\!\! \sqrt{\alpha c(\epsilon) \log\frac{|\Pi|}{\delta}} \right)\!,\\
    \mathcal{B}_{\mathsf{cDP}} &\!\!:=\!\! \left( \cV_m \!+\!  \left(1\!\!+\!\!\frac{1}{\sqrt{\epsilon}}\right)\sqrt{\frac{\log(|\cL||\Pi|/\delta)}{n}} \!\!+\!\! \sqrt{\alpha \log\frac{|\Pi|}{\delta}} \right)\!,
\end{align*}
where $ \cV_m:=V_{\max} {\frac{\log(|\Pi|/\delta)}{\sqrt{m}}}$.
\end{theorem}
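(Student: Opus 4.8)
The plan is to split the statement into a privacy part and a utility part, and to reduce the utility part to the iterative-$\chi$PO analysis of \citet{huang2024correcting} by supplying a single new ingredient: a generalization-error bound for the first-stage (preference-model) least-square regression under corruption and privacy. The privacy part I would dispatch first. In \ctl and \ltc the only contact Algorithm~\ref{alg:iter-schipo} makes with the raw labels is through the \texttt{RR}-privatized values $z_i$, which are $\epsilon$-LDP per label; the first-stage regression producing $\hat\ell$ and the subsequent self-play loop (which reads only the unlabeled set $\cD_x$) are post-processing, so the whole procedure is $\epsilon$-LDP. In \cdp, $\hat\ell$ is drawn from the exponential mechanism with score $-L$; changing one sample changes one summand of $L$, and each summand lies in $[0,4]$ (this is where boundedness of the square / Brier loss is essential, in contrast with the unbounded log-loss), so the sensitivity of $L$ is at most $4$ and the temperature $\epsilon/8 = \epsilon/(2\cdot4)$ makes this step $\epsilon$-DP; post-processing closes it.

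For Stage I (preference-model error) I would establish the key lemma: in each setting the minimizer $\hat\ell$ of the (possibly $c(\epsilon)$-rescaled) square-loss regression over $\cL$ obeys $\mathbb{E}_{\mu}[(\hat\ell-\ell^{\star})^2]\lesssim\mathcal{E}^2$, where $\mu$ is the data law (prompt $\sim\rho$, both responses $\sim\piref$) and $\mathcal{E}$ is exactly the per-setting rate inside $\mathcal{B}$. Three facts drive it. First, with $\bar z_i=2z_i-1$ the \texttt{RR} identity $\mathbb{E}[\bar z_i\mid\text{label}]=(2\cdot\text{label}-1)/c(\epsilon)$ shows that on the uncorrupted fraction the rescaled target $c(\epsilon)\bar z_i$ is an unbiased draw for $\ell^{\star}$ with noise of variance $O(c(\epsilon)^2)$, so a uniform-convergence / Bernstein argument for bounded-response least squares over $\cL$ gives the statistical term $c(\epsilon)\sqrt{\log(|\cL||\Pi|/\delta)/n}$. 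Second, for \cdp the exponential-mechanism utility bound gives $L(\hat\ell)\le\min_{\ell\in\cL}L(\ell)+O(\log(|\cL|/\delta)/\epsilon)$, and converting this $O(1/\epsilon)$ excess \emph{squared} empirical risk into an $L_2$ error produces the additive $\tfrac{1}{\sqrt\epsilon}\sqrt{\log(|\cL||\Pi|/\delta)/n}$, i.e.\ the $(1+1/\sqrt\epsilon)$ factor. Third, on the corrupted fraction (cardinality $\lesssim\alpha n$ with high probability) each empirical-loss summand is bounded --- by $O(1)$ in \ctl and \cdp, where the adversary only picks a $\{0,1\}$-valued label (which, in \ctl, is still fed through the \texttt{RR} channel), but by $O(c(\epsilon))$ in \ltc, where the adversary picks the corrupted \emph{privatized} label after seeing $z_i$ and the rescaled target lives on scale $c(\epsilon)$; charging the worst case to those samples in the robust-regression bookkeeping adds $O(\alpha)$, resp.\ $O(\alpha\,c(\epsilon))$, to the excess squared risk, yielding the $\sqrt{\alpha\log(|\Pi|/\delta)}$ vs.\ $\sqrt{\alpha\,c(\epsilon)\log(|\Pi|/\delta)}$ terms. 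That asymmetry --- the corrupted summands are bounded $c(\epsilon)$-freely in \ctl/\cdp but not in \ltc --- is exactly the claimed separation ``corruption after privacy is worse''. Specialized to the BT parametrization (plug the sigmoid into $\ell$), this lemma already recovers Theorems~\ref{thm:BT-local} and~\ref{thm:BT_central}.

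For Stage II (self-play and assembly), with $\hat\ell$ in hand I would run the iterative-$\chi$PO argument of \citet{huang2024correcting} essentially verbatim. Each mirror-descent round solves a least-square regression of the reparametrized reward difference $f^{\beta,\eta}_{\pi,\pi^t}$ of~\eqref{eq:f} over $\cD_x$; that set carries no sensitive labels, so there is neither privacy noise nor corruption here, and Assumptions~\ref{assum:B-main} and~\ref{assum:C-main} make it a realizable bounded (scale $V_{\max}$) regression with per-round error $\cV_m=V_{\max}\log(|\Pi|/\delta)/\sqrt m$. Feeding (a) the per-round errors $\cV_m$, (b) the Stage-I error $\mathcal{E}$, and (c) the mirror-descent regret of the self-play game into the duality-gap decomposition of \citet{huang2024correcting}, and optimizing $T,\beta,\eta$, gives $\mathsf{DG}(\hat\pi)\lesssim\min_{C\ge1}\{\mathsf{subopt}(\hat\pi,C)+C(\cV_m+\mathcal{E})\}$. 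The factor $C$ and the outer minimum come from the change of measure needed to pass from $L_2$ control of $\hat\ell$ under $\mu$ to the on-policy $L_1$ control that governs the game value: for a comparator in $\Pi_C$ this costs $\sqrt{\max_{x}D_{\chi^2}(\pi(x)\|\piref(x))}\le\sqrt C$ (the $\chi^2$-regularizer in Assumption~\ref{assum:B-main} keeps the relevant iterates inside such a ball), while $\mathsf{subopt}(\hat\pi,C)$ is the bias of restricting the max-player to $\Pi_C$. Substituting the three values of $\mathcal{E}$ yields $\mathcal{B}_{\ctl},\mathcal{B}_{\ltc},\mathcal{B}_{\cdp}$.

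I expect the main obstacle to be the Stage-I lemma, and inside it the corruption term uniformly over the three settings: one must handle \emph{adaptive} corruption in \ltc (the adversary sees the privatized label), which forces the ``charge the corrupted points'' argument instead of a bias / Cauchy--Schwarz one and is exactly what injects the extra $\sqrt{c(\epsilon)}$; combine this with the exponential-mechanism utility bound for \cdp and the $c(\epsilon)$-inflated noise of the local models; and carry it all inside one uniform-convergence statement for bounded-response least squares over $\cL$ --- every step of which leans on the boundedness of the Brier loss, which is precisely why the log-loss used by \chipo cannot be pushed through. Once that lemma is in place, threading it through the \citet{huang2024correcting} duality-gap bound with the correct $C$-dependence should be comparatively routine.
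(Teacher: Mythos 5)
Your proposal is correct and follows essentially the same route as the paper: the privacy claim via post-processing (local model) and the sensitivity-$4$ exponential mechanism (central model); a Stage-I generalization bound for the corrupted/privatized least-squares estimate of $\hat\ell$ whose per-setting decomposition (RR-unbiasedness after $c(\epsilon)$ rescaling, exponential-mechanism excess risk, and the $O(\alpha)$ vs.\ $O(\alpha\,c(\epsilon))$ corruption charge depending on whether corruption precedes or follows privatization) is exactly the paper's Lemma~\ref{lem:gen}; and a Stage-II reduction to the iterative-\chipo duality-gap decomposition of \citet{huang2024correcting}, which is the paper's Theorem~\ref{thm:meta-gpm}. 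The only cosmetic difference is that the paper bounds the corruption contribution in expectation via a Freedman-inequality decomposition of $\mathbb{E}[U_i^h]$ rather than by counting corrupted samples, but both yield the same $O(\alpha n)$ (resp.\ $O(\alpha c(\epsilon) n)$) excess.
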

\begin{remark}
    We remark again that this is the first set of results for private and robust alignment under a general preference model.
\end{remark}

\section{Key Techniques Highlight}
In this section, we would like to highlight a key common technique behind all the results in previous sections. In particular, all of our sample complexity bounds build upon the following lemma that characterize {generazation error bounds} of least-square regression under \ctl\!, \ltc or \cdp\!. 

\begin{lemma}[Informal statement of Lemma~\ref{lem:gen}]
\label{lem:gen_informal}
 Let $\{(u_i, y_i')\}_{i=1}^n$ be a clean dataset and $\cH$ be a hypothesis class such that realizability holds ($h^* \in \cH$). Define generalization error for any $\hat{h}$ as
    \begin{align*}
        \err_{\mathsf{gen}}^2:=  \mathbb{E}_{u \sim \rho'} [(\hat{h}(u) - h^*(u))^2],
    \end{align*}
for feature distribution $\rho'$. Then, with probability at least $1-\zeta$,  we have 
 
1. Under \ctl and \ltc, given \(\{(u_i,z_i')\}_{i=1}^n\) as input dataset, $\hat{h} = \argmin_{h \in \cH} \sum_{i=1}^n (h(u_i) - c(\epsilon) z_i')^2$ achieves 
\begin{align*}
            \err_{\mathsf{gen, CTL}}^2 &\lesssim c(\epsilon)^2 \cdot \frac{\log(|\cH|/\zeta)}{n} + \alpha,\\
            \err_{\mathsf{gen, LTC}}^2 &\lesssim c(\epsilon)^2 \cdot \frac{\log(|\cH|/\zeta)}{n} + \alpha \cdot c(\epsilon)~.
\end{align*}

2. Under \cdp, given \(\{(u_i,\bar{y}_i')\}_{i=1}^n\) as input dataset, running exponential mechanism using square loss over $\bar{y}_i'$ yields
    \begin{align*}
            \err_{\mathsf{gen, cDP}}^2 &\lesssim \frac{\log(|\cH|/\zeta)}{n} +\frac{\log(|\cH|/\zeta)}{n \epsilon} + \alpha~.
        \end{align*}
\end{lemma}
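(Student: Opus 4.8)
The plan is to fold all three settings (\ctl, \ltc, and \cdp) into one least-squares template and to control the generalization error through a one-sided Bernstein argument, i.e.\ to prove the following self-contained statement and then simply read off the three bounds. Suppose we observe $\{(u_i,w_i)\}_{i=1}^n$ with $u_i$ i.i.d.\ from $\rho'$, $|w_i|\le R$ a.s., $\|h\|_\infty\le 1$ for all $h\in\cH$, $h^\star\in\cH$, and $\mathbb{E}[w_i\mid u_i]=h^\star(u_i)+b_i$ for some (data-dependent) bias $b_i$; suppose further that $\hat h\in\cH$ satisfies the approximate-optimality condition $\sum_i\big[(\hat h(u_i)-w_i)^2-(h^\star(u_i)-w_i)^2\big]\le\xi$ for some slack $\xi\ge 0$. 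Then, with probability at least $1-\zeta$,
\[
\err_{\mathsf{gen}}^2 \;\lesssim\; \frac1n\sum_{i=1}^n b_i^2 \;+\; \frac{R^2\log(|\cH|/\zeta)}{n} \;+\; \frac{\xi}{n}.
\]
The three parts of the lemma will follow by instantiating the triple $(R,\xi,\tfrac1n\sum_ib_i^2)$.

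For the instantiation: using $\mathbb{E}[z'\mid\bar y']=\bar y'/c(\epsilon)$ for \texttt{RR} together with the Huber mixture $(1-\alpha)\,(\text{clean})+\alpha\,(\text{bad})$, a direct computation gives, under \ctl and \cdp, $\mathbb{E}[w_i\mid u_i]=(1-\alpha)h^\star(u_i)+\alpha\mu_i^{B}$ with $|\mu_i^{B}|\le 1$, hence $|b_i|\le 2\alpha$ and $\tfrac1n\sum_ib_i^2\lesssim\alpha$; whereas under \ltc the privatization is applied first, so $\mathbb{E}[w_i\mid u_i]=(1-\alpha)h^\star(u_i)+\alpha\,c(\epsilon)\mu_i'$ with $|\mu_i'|\le 1$, hence $|b_i|\le 2\alpha\,c(\epsilon)$ and $\tfrac1n\sum_ib_i^2\lesssim\alpha^2c(\epsilon)^2$. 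The target range is $R=c(\epsilon)$ for \ctl/\ltc (we regress against $w_i=c(\epsilon)z_i'$) and $R=1$ for \cdp (we regress against $w_i=\bar y_i'$). The slack is $\xi=0$ in the local models, since $\hat h$ exactly minimizes $\sum_i(h(u_i)-w_i)^2$ over $\cH\ni h^\star$; in the central model, changing one sample alters $L(h)=\sum_i(h(u_i)-w_i)^2$ by at most $(\|h\|_\infty+1)^2\le 4$, so the exponential mechanism with inverse temperature $\epsilon/8$ is $\epsilon$-DP, and its standard utility guarantee yields $L(\hat h)-\min_{h\in\cH}L(h)\le\tfrac{8\log(|\cH|/\zeta)}{\epsilon}$ w.p.\ $1-\zeta$, i.e.\ $\xi=\tfrac{8\log(|\cH|/\zeta)}{\epsilon}$. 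Plugging these in gives the \ctl and \cdp bounds at once; for \ltc one additionally uses $\alpha^2c(\epsilon)^2\le\alpha\,c(\epsilon)$ whenever $\alpha\,c(\epsilon)\le 1$, and otherwise the claimed bound is vacuous since $|\hat h-h^\star|\le 2$ already forces $\err_{\mathsf{gen}}^2\le 4$.

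To prove the template, set $\Delta_i:=\hat h(u_i)-h^\star(u_i)$ and $X_i:=(\hat h(u_i)-w_i)^2-(h^\star(u_i)-w_i)^2=\Delta_i\big(\hat h(u_i)+h^\star(u_i)-2w_i\big)$, so that $\mathbb{E}[X_i\mid u_i]=\Delta_i^2-2b_i\Delta_i$, $|X_i|\le 4R|\Delta_i|$, and $\mathrm{Var}(X_i\mid u_i)\le 4R^2\Delta_i^2$. Adapting the standard least-squares generalization lemma for finite classes — a Bernstein bound for the centered, independent (non-identical) summands $X_i-\mathbb{E}[X_i\mid u_i]$, with a union bound over $\cH$ handling the data-dependence of $\hat h$ — we obtain, on an event of probability $1-\zeta$, $\tfrac1n\sum_iX_i\ge\widehat{\mathbb{E}}[\Delta^2]-2\sqrt{\widehat{\mathbb{E}}[b^2]\,\widehat{\mathbb{E}}[\Delta^2]}-c_1\big(R\sqrt{\widehat{\mathbb{E}}[\Delta^2]\log(|\cH|/\zeta)/n}+R^2\log(|\cH|/\zeta)/n\big)$, where $\widehat{\mathbb{E}}[\cdot]:=\tfrac1n\sum_i(\cdot)_i$ and the bias cross-term is bounded by Cauchy--Schwarz. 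Combining with $\tfrac1n\sum_iX_i\le\xi/n$ gives a quadratic inequality in $t:=\sqrt{\widehat{\mathbb{E}}[\Delta^2]}$ whose solution is $\widehat{\mathbb{E}}[\Delta^2]\lesssim\widehat{\mathbb{E}}[b^2]+R^2\log(|\cH|/\zeta)/n+\xi/n$. A second Bernstein bound plus union bound over $\cH$, using $0\le\Delta^2\le 4$, then transfers this in-sample quantity to the population error $\err_{\mathsf{gen}}^2=\mathbb{E}_{u\sim\rho'}[\Delta^2]$ at the cost of an extra $\log(|\cH|/\zeta)/n$ term, completing the template.

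The main obstacle is the variance accounting in the Bernstein step: one must use the \emph{local} variance bound $\mathrm{Var}(X_i\mid u_i)\le 4R^2\Delta_i^2$ (proportional to the squared error being bounded) rather than the crude worst case $O(R^2)$; only this makes the quadratic collapse to the benign term $R^2\log(|\cH|/\zeta)/n$, keeps the privacy cost at $c(\epsilon)^2$ rather than a larger power, and — crucially — prevents any multiplicative coupling between $c(\epsilon)$ and $\alpha$ under \ctl and \cdp, so that those costs come out additive; the coupling \emph{does} reappear in \ltc, but only through the bias term $\tfrac1n\sum_ib_i^2\lesssim\alpha^2c(\epsilon)^2$. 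The remaining ingredients — the bias bookkeeping, the exponential-mechanism sensitivity/utility calculation, and the in-sample-to-out-of-sample transfer — are routine.
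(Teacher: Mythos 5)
Your proposal is correct and follows essentially the same route as the paper's proof of Lemma~\ref{lem:gen}: an excess-square-loss decomposition, Freedman/Bernstein concentration with the self-bounding ("local") variance $\mathrm{Var}(X_i\mid u_i)\lesssim R^2\Delta_i^2$, a union bound over the finite class $\cH$, the ERM inequality $\sum_i U_i^{\hat h}\le 0$ in the local settings versus the exponential-mechanism utility bound $\lesssim \log(|\cH|/\zeta)/\epsilon$ with sensitivity $4$ in the central setting, and the unbiasedness of scaled randomized response to kill the privacy cross-term. The one substantive difference is your treatment of the corruption term: the paper bounds the cross-term $\mathbb{E}[(h-h^*)(2\bar y_i-2y_i')]$ linearly via boundedness, yielding $\alpha$ (resp.\ $\alpha\, c(\epsilon)$), whereas you fold the corruption into a conditional-mean bias $b_i$ and absorb it by Cauchy--Schwarz/AM--GM into the quadratic, which yields $\frac1n\sum_i b_i^2\lesssim \alpha^2$ (resp.\ $\alpha^2 c(\epsilon)^2$) before you relax back to the claimed rates; this is exactly the $\alpha\to\alpha^2$ sharpening the paper only conjectures in its Discussion section, so your argument is, if anything, slightly stronger than what is claimed.
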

The above result is a nontrivial extension of the standard findings in~\cite{song2022hybrid} to the private and corrupted settings. Given the widespread use of least-squares regression oracles in offline, online, and hybrid RL~\cite{agarwal2019reinforcement}, we believe this result can be readily applied to drive new advancements in the private and corrupted scenarios.

\section{Discussion}
\label{sec:dis}
In this section, we first provide a detailed discussion on the concurrent work~\cite{zhou2025unified} on private and robust offline alignment, which shares similar motivations but has the following key differences. First,~\citet{zhou2025unified} only focuses on the linear model with BT-preference, while we consider general function approximations for BT-preference as well as a general preference model. Second,~\citet{zhou2025unified} only considers local DP, while we also consider central DP. Third,~\citet{zhou2025unified} considers a strong corruption model while we consider a slightly weaker model, i.e., Huber corruption model. This gives a different term regarding the interplay between privacy and corruption, i.e., $c(\epsilon) \sqrt{\alpha}$ vs. $\sqrt{c(\epsilon) \alpha}$. We also believe that the dependence on $\alpha$ in Lemma~\ref{lem:gen_informal} can be improved to $\alpha^2$ by leveraging the Huber corruption property\footnote{In fact, we are working on a new paper that will have a more thorough discussion. Stay tuned.}. Second, although we mainly focus on the theory in the main body, we have also managed to conduct some experiments as proof-of-concept, see Appendix~\ref{app:exp} for details.

\section{Conclusion}
We introduced \schipo, a novel offline alignment method that achieves state-of-the-art theoretical guarantees in the presence of noisy labels caused by privacy protections and/or adversarial corruption. Our algorithm can handle both BT-preference and general preference models. While our primary focus is theoretical, \schipo\ remains practical and easy to implement, requiring only a minor modification to \chipo and \texttt{DPO}. Future work will focus on comprehensive empirical evaluations to further validate our findings.

\section*{Acknowledgements}
XZ is supported in part by NSF CNS-2153220 and CNS-2312835.

\section*{Impact Statement}

This paper presents work whose goal is to advance the field of 
Machine Learning. There are many potential societal consequences 
of our work, none which we feel must be specifically highlighted here.



\bibliography{example_paper}
\bibliographystyle{icml2024}

\newpage
\appendix
\onecolumn
\section{Additional Related Work}
\label{app:related}

The alignment problem has been extensively studied in the previous literature~\cite{yu2021differentially,ziegler2019fine,stiennon2020learning,bai2022training,shin2023benchmarks,zhan2023provable,mandal2024corruption}. Besides the private or robust alignment related work we mentioned in the main text, we refer the readers to \citet{sun2024trustllm} for more general trustworthiness in large language models and to  \citet{xiao2025foundations,touvron2023llama} for comprehensive surveys on large language models. Here, we discuss some additional related work.

\textbf{Alignment with Human Feedback.} The most fundamental method to align LLM is Reinforcement Learning from Human Feedback (RLHF)~\cite{christiano2017deep,ouyang2022training}, which has been practically used in \citet{openai2022chatgpt,sun2024principle,bai2022training,bai2022constitutional}. Instead of fine-tuning models by training a reward model from human feedback and optimizing policy using Reinforcement Learning (e.g., Proximal policy optimization (PPO) \cite{schulman2017proximal}), Direct Preference Optimization (DPO)~\cite{rafailov2024direct} simplifies alignment by directly optimizing the policy using human preference data. This approach bypasses the need for a reward model and reinforcement learning method, resulting in a more stable and efficient training process~\cite{abdin2024phi}. In the following, we divide related work on alignment with human feedback based on different perspectives: 
\begin{itemize}
   \item  \textbf{Extended works from DPO.} Taking DPO as a starting point, many preference optimization variants have emerged to improve efficiency, stability, adaptability, or other properties. Relevant examples are Chi-Squared Preference Optimization (\chipo)~\cite{huang2024correcting}, Rejection Sampling Optimization (RSO)~\cite{liu2023statistical}, Identity Preference Optimization (IPO)~\cite{azar2024general}, $\Psi$PO~\cite{azar2024general}, generalized preference optimization (GPO)~\cite{tang2024generalized}, Direct Nash Optimization (DNO)~\cite{rosset2024direct}, Self-Play Preference Optimization (SPPO)~\cite{wu2024self}, and Exploratory Preference Optimization (XPO)~\cite{xie2024exploratory}. Our \texttt{Square}\chipo is a variant of \chipo, where the main difference is in the loss function---more on this in the next bullet point.
   
   \item \textbf{The role of loss function.} Our \texttt{Square}\chipo is mainly different from the original \chipo in the loss function used to estimate the policy, changed from log-loss to least square loss over probabilities. Compared to the log-loss, the square loss provides a more interpretable measure of error, avoids extreme gradient values for small probability estimates, and ensures numerical stability. \citet{wang2024central} explores how different loss functions affect the sample efficiency and adaptivity in classification and RL problems. We remark that the use of the square loss is not by any means new in RL. For example, we have temporal-difference (TD) learning with squared loss for regression \cite{jin2021bellman,xie2022role} and Fitted Q-Iteration (FQI) \cite{munos2008finite,chen2019information}, which uses least-squares to approximate the Bellman backup. Thus, we believe that our new generalization error bound can be useful when one aims to extend those problems to private and robust scenarios.
   \item \textbf{Type of regularization divergence.} The objective function of preference optimization can be generally written as \textit{(reward) loss} $+$ (regularization) \textit{penalty }~\cite{xiao2025foundations}. A number of different regularizers have been proposed in the literature. \citet{wang2023beyond} proposes a generalized approach, $f$-DPO, by using $f$-divergences for the regularization term, to integrate a variety of popular divergences. Our mixed $\chi^2$ divergence in \texttt{Square}\chipo can be viewed as a special case of $f$-DPO, and it can provably alleviate overoptimization and achieve sample-complexity guarantees based on single-policy concentrability~\cite{huang2024correcting}. Notably, $\chi^2$-regularization has been used in a number of RL works to derive single-policy concentrability guarantee~\cite{wang2024oracle,gabbianelli2024importance,duan2020minimax,zhan2022offline,amortila2024scalable,zhu2024provably,lee2021optidice,ma2022offline,ma2022smodice}. \citet{xiao2024algorithmic} introduces a new regularizer called preference matching divergence which helps the LLM balance response diversification and reward maximization.  Moreover, \citet{liu2024provably} shows that the SFT Loss is implicitly an adversarial regularizer in RLHF, that provably mitigates overoptimization. 
   \item \textbf{Coverage coefficients (or concentrability coefficients).} Coverage, a concept that captures how the training data ``covers'' the test distribution, has played a fundamental role in offline RL~\cite{munos2008finite,xie2021bellman,uehara2021pessimistic,zhan2022offline}, offline-online (hybrid) RL~\cite{ross2012agnostic,xie2021policy,song2022hybrid,amortila2024harnessing,song2024importance}, and online RL~\cite{kakade2002approximately,bagnell2003policy,xie2022role}. The sub-optimality guarantees of \texttt{Square}\chipo obtained under the BT-preference model are based on the \textit{single-policy concentrability}, that is, the data only needs to have a good cover over the chosen comparator policy. This is the gold standard in offline reinforcement learning due to being more effective compared with \textit{all-policy concentrability}, which requires the offline data distribution to provide good coverage over the state distributions induced by \textit{all} candidate policies.
   
\end{itemize}

\textbf{Privacy and robustness interplay.} The interaction of privacy and robustness has been investigated in many machine learning tasks. In the multi-arm bandits problem, the interaction of central DP and Huber corruption on rewards is investigated in \citet{wu2024private}, while the different orders of LDP and Huber corruption of rewards feedback of bandits have been studied in \citet{zhoulocally}. \citet{charisopoulos2023robust} study the problem of linear bandits problem, where the rewards are under LDP and Huber model. In statistical learning, there are many works that studied the interaction of privacy and robustness in different tasks~\citep[e.g.,][]{kamath2024broaderlandscaperobustnessalgorithmic,li2023robustness,chhor2023robust}. Other works have studied the possibility of privacy might imply robustness or vice-versa. For example, \citet{georgiev2022privacy} concludes that private mechanisms are
automatically robust in many statistics problems. In contrast, \citet{hopkins2023robustness} shows adversarial robustness implies differential privacy in statistical estimation. In this paper, we investigate both central DP and local DP interacting with Huber contamination model in the offline alignment problem.

\textbf{Private online RL.} In contrast to the offline RL setting in this paper, there has been a recent line of work on private (and robust) online RL under various settings and DP models, including MABs (e.g.,~\citet{mishra2015nearly,sajed2019optimal,chowdhury2022distributed,wu2023private,ren2020multi}), structured (contextual) bandits (e.g.,~\citet{shariff2018differentially,zheng2020locally,chowdhury2022shuffle,li2022differentially,zhou2021local}) and RL (e.g.,~\citet{vietri2020private,garcelon2021local,chowdhury2022differentially,qiao2023near,zhou2022differentially}). One main limitation of these works is that they only consider tabular, linear (or kernerlized) function approximations, while general function approximation result is still missing. 


\section{Generalization Bounds of Least-Square Regression under Privacy and Corruption}
\label{app:stat}
In this section, we provide a detailed version of our main techniques, i.e., generalization error bound of least-square regression under privacy constraints and corruption. We mainly focus on the case where the response variable is binary, given its immediate application in our scenarios. However, it can be easily generalized to the continuous case via random rounding, see~\citet{zhoulocally}. 

\begin{lemma}
\label{lem:gen}
    Let $\{(u_i, y_i')\}_{i=1}^n$ be a clean dataset of $n$ points where each point is independently sampled from $u_i \sim \rho'$ and $y_i' \sim p(\cdot |u_i):= h^*(u_i) + \eta_i$, where $\{\eta_i\}_{i=1}^n$ are independent random variables such that $\mathbb{E}[y_i' |u_i] = h^*(u_i)$ and $y_i' \in\{-1, 1\}$. Let $\cH: \cU \to [-1, 1]$ be a class of real valued functions such that $h^* \in \cH$, i.e., we assume realizability. 
    Define the generalization error bounds for a learning algorithm's output $\hat{h}$ as
    \begin{align*}
        \err_{\mathsf{gen}}^2:=  \mathbb{E}_{u \sim \rho'} [(\hat{h}(u) - h^*(u))^2]~.
    \end{align*}
    Then, we have the following results across different settings:
    \begin{enumerate}
        \item Under \ctl or \ltc where the observed dataset is $\{(u_i, z_i')\}_{i=1}^n$ (with $z_i' \in \{-1,1\}$) that is generated according to \ctl or \ltc (Definition~\ref{def:ctlandltc}), the least-square regression solution $\hat{h} = \argmin_{h \in \cH} \sum_{i=1}^n (h(u_i) - c(\epsilon) z_i')^2$ (with $c(\epsilon)= \frac{e^{\epsilon}+1}{e^{\epsilon}-1}$) satisfies with probability at least $1-\zeta$ 
        \begin{align*}
            \err_{\mathsf{gen, CTL}}^2 &\lesssim c(\epsilon)^2 \cdot \frac{\log(|\cH|/\zeta)}{n} + \alpha,\\
            \err_{\mathsf{gen, LTC}}^2 &\lesssim c(\epsilon)^2 \cdot \frac{\log(|\cH|/\zeta)}{n} + \alpha \cdot c(\epsilon)~.
        \end{align*}
        \item Under \cdp where the observed dataset is $\{(u_i, \bar{y}_i')\}_{i=1}^n$ (with $\bar{y}_i' \in \{-1,+1\}$) that is generated according \cdp (Definition~\ref{def:cDP}), sampling $\hat{h}$ via the following exponential mechanism:
        \begin{align*}
            P(h) \propto \exp\left(-\frac{\epsilon}{8} \cdot L(h)\right)~\forall h \in \cH,
        \end{align*}
        with $L(h):= \sum_{i\in [n]} [h(u_i) - \bar{y}_i']^2$, yields that 
        \begin{align*}
            \err_{\mathsf{gen, cDP}}^2 &\lesssim \frac{\log(|\cH|/\zeta)}{n} +\frac{\log(|\cH|/\zeta)}{n \epsilon} + \alpha~.
        \end{align*}
    \end{enumerate}
\end{lemma}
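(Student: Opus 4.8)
\textbf{Proof proposal for Lemma~\ref{lem:gen}.}

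The plan is to reduce all three settings to a single master inequality for least-square regression with a (possibly biased) noisy response, and then control the bias and variance contributions separately for each corruption/privacy model. First I would set up the standard least-squares decomposition. For the local settings (\ctl, \ltc), I would write $\hat h = \argmin_{h \in \cH} \sum_i (h(u_i) - c(\epsilon)z_i')^2$ and exploit the optimality of $\hat h$ against $h^*$ (which is feasible by realizability): expanding $\sum_i (\hat h(u_i) - c(\epsilon)z_i')^2 \le \sum_i (h^*(u_i) - c(\epsilon)z_i')^2$ gives, after rearranging, $\sum_i (\hat h(u_i) - h^*(u_i))^2 \le 2\sum_i (\hat h(u_i) - h^*(u_i))(c(\epsilon)z_i' - h^*(u_i))$. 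The key observation is that $c(\epsilon)z_i'$ is an \emph{unbiased} estimate of the clean mean $h^*(u_i)$ under privacy-only (this is exactly the $c(\epsilon)$-rescaling intuition from \texttt{RR} described in the paper), so I would split $c(\epsilon)z_i' - h^*(u_i)$ into a mean-zero martingale-difference part plus a bias part coming only from the Huber corruption.

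Next I would bound the two pieces. For the stochastic part I would apply a one-sided Freedman/Bernstein-type concentration over the finite class $\cH$ (union bound over $|\cH|$), noting the per-term conditional variance is $O(c(\epsilon)^2)$, to get that the cross term is at most $\tfrac12 \sum_i(\hat h(u_i)-h^*(u_i))^2 + O(c(\epsilon)^2\log(|\cH|/\zeta))$; absorbing the first term on the left yields an empirical-error bound of order $c(\epsilon)^2\log(|\cH|/\zeta)$. For the bias part: under \ctl, corruption acts on the raw label before \texttt{RR}, so the shift in $\mathbb{E}[c(\epsilon)z_i']$ away from $h^*(u_i)$ has magnitude $O(\alpha)$ per coordinate (the $c(\epsilon)$ cancels because both the private mean and its target are scaled identically), and a Cauchy–Schwarz step turns the bias cross-term into $\tfrac14\sum_i(\hat h(u_i)-h^*(u_i))^2 + O(n\alpha^2)$, which after dividing by $n$ contributes $O(\alpha^2)\le O(\alpha)$; under \ltc, corruption acts \emph{after} \texttt{RR}, so it perturbs the already-scaled value and, after multiplying by $c(\epsilon)$, the per-coordinate bias is $O(\alpha \cdot c(\epsilon))$, giving the extra $c(\epsilon)$ factor. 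Finally I would pass from empirical error $\tfrac1n\sum_i(\hat h(u_i)-h^*(u_i))^2$ to population error $\err_{\mathsf{gen}}^2 = \mathbb{E}_{u\sim\rho'}[(\hat h(u)-h^*(u))^2]$ via a uniform-convergence argument over $\cH$ (again a union bound, with the standard trick that the variance of $(\hat h(u)-h^*(u))^2$ is controlled by its mean since the range is bounded), which costs only another $O(\log(|\cH|/\zeta)/n)$ term. This is essentially the route of \citet{song2022hybrid}, and the adaptivity (no knowledge of which of \ctl/\ltc holds) is automatic because the algorithm is identical.

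For the central \cdp case I would instead use the exponential-mechanism utility lemma: with score $-L(h) = -\sum_i (h(u_i)-\bar y_i')^2$ and sensitivity $4$ (changing one sample moves $L$ by at most $4$ since both terms lie in $[-1,1]$), the standard guarantee gives that with probability $1-\zeta$ the sampled $\hat h$ satisfies $L(\hat h) \le \min_{h\in\cH} L(h) + O\!\big(\tfrac{1}{\epsilon}\log(|\cH|/\zeta)\big)$. Applying this with $h^*$ as the comparator, then running the same expand-and-rearrange algebra as above on $L(\hat h)-L(h^*)$ (now $\bar y_i'$ is the corrupted-but-not-privatized label, so its mean is within $O(\alpha)$ of $h^*(u_i)$), yields an empirical error of order $\tfrac1n\log(|\cH|/\zeta) + \tfrac{1}{n\epsilon}\log(|\cH|/\zeta) + \alpha$, and the same empirical-to-population step finishes it.

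The main obstacle I expect is the careful bias accounting in the \ltc case and making sure the $c(\epsilon)$ bookkeeping is tight: one must verify that post-privatization Huber corruption, once rescaled by $c(\epsilon)$, genuinely contributes a bias of order $\alpha c(\epsilon)$ (and not, say, $\alpha c(\epsilon)^2$), and that the stochastic term's variance is still $O(c(\epsilon)^2)$ rather than blowing up. A secondary technical point is choosing the split constant in the cross-term bounds (the $\tfrac12$ vs.\ $\tfrac14$ above) so that both the stochastic and bias cross-terms can be simultaneously absorbed into the left-hand $\sum_i(\hat h(u_i)-h^*(u_i))^2$; this is routine but must be done consistently. Everything else — the union bounds, the Bernstein/Freedman inequality, the exponential-mechanism utility bound — is standard.
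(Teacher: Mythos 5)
Your proposal follows essentially the same route as the paper's proof: the basic inequality from the optimality of $\hat h$ (respectively the exponential-mechanism utility bound with sensitivity $4$ for \cdp), a Freedman/Bernstein concentration step with a union bound over $\cH$ whose variance term is absorbed into the squared error, and a separate bias accounting in which the placement of the $c(\epsilon)$ rescaling relative to the Huber corruption produces the $\alpha$ versus $\alpha\, c(\epsilon)$ distinction between \ctl and \ltc. The only differences are bookkeeping: the paper applies Freedman to $D_i^h = \mathbb{E}[U_i^h]-U_i^h$ so the population error appears directly (no separate empirical-to-population pass), and your AM--GM treatment of the bias cross term would in fact yield the sharper $\alpha^2$-type dependence that the paper itself notes is attainable, which is consistent with the stated bounds.
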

\begin{remark}
    This result can be viewed as a nontrivial generalization of the standard one in~\citet{song2022hybrid} to the private and corrupted scenarios.
\end{remark}

A key lemma in our proof is the following form of Freedman's inequality.

\begin{lemma}[Theorem 1 in~\citet{beygelzimer2011contextual}]
\label{lem:freedman}
    Let $\{u_i\}_{i\le n}$ be a real-valued martingale difference sequence adapted to a filtration $\{\cF_i\}_{i\le n}$. If $u_i \le R$ almost surely, then for any $\eta \in (0, 1/R]$, with probability at least $1-\zeta$,
    \begin{align*}
        \sum_{i=1}^n u_i \le \eta (e-2) \sum_{i=1}^n \mathbb{E}_{i-1}[u_i^2] + \frac{\log(1/\zeta)}{\eta},
    \end{align*}
    where $\mathbb{E}_{i-1}[\cdot] := \mathbb{E}[\cdot | \cF_{i-1}]$.
\end{lemma}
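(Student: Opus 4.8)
The plan is to prove this one-sided, variance-adaptive Freedman bound via the standard exponential-supermartingale (MGF) method. Write $S_k := \sum_{i=1}^k u_i$ and $V_k := \sum_{i=1}^k \mathbb{E}_{i-1}[u_i^2]$, and fix $\eta \in (0, 1/R]$. The central object is the process
\begin{align*}
M_k := \exp\!\left(\eta S_k - (e-2)\eta^2 V_k\right), \qquad M_0 := 1,
\end{align*}
and the goal is to show that $\{M_k\}$ is a supermartingale with respect to $\{\cF_k\}$; the tail bound then follows from a single application of Markov's inequality.

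The key elementary ingredient is the inequality $e^x \le 1 + x + (e-2)x^2$, valid for all $x \le 1$. I would verify this by setting $g(x) := 1 + x + (e-2)x^2 - e^x$ and checking that $g(0)=g(1)=0$ while $g$ is convex on $(-\infty,0]$ and unimodal on $[0,1]$, forcing $g \ge 0$ throughout $x \le 1$; note the constant $(e-2)$ is sharp precisely because equality holds at $x=1$. Since $u_k \le R$ almost surely and $\eta \le 1/R$, we have $\eta u_k \le 1$, so the inequality applies with $x = \eta u_k$. Taking the conditional expectation $\mathbb{E}_{k-1}[\cdot]$ and using that $\{u_i\}$ is a martingale difference sequence, i.e.\ $\mathbb{E}_{k-1}[u_k] = 0$, yields
\begin{align*}
\mathbb{E}_{k-1}\!\left[e^{\eta u_k}\right] \le 1 + (e-2)\eta^2 \mathbb{E}_{k-1}[u_k^2] \le \exp\!\left((e-2)\eta^2 \mathbb{E}_{k-1}[u_k^2]\right),
\end{align*}
where the last step is $1 + t \le e^t$. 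Multiplying by the $\cF_{k-1}$-measurable factor $\exp(-(e-2)\eta^2 \mathbb{E}_{k-1}[u_k^2])$ gives $\mathbb{E}_{k-1}[M_k/M_{k-1}] \le 1$, so $\{M_k\}$ is a supermartingale with $\mathbb{E}[M_n] \le \mathbb{E}[M_0] = 1$.

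Finally I would conclude by Markov's inequality: $\mathbb{P}[M_n \ge 1/\zeta] \le \zeta\,\mathbb{E}[M_n] \le \zeta$, so with probability at least $1-\zeta$ we have $M_n < 1/\zeta$, i.e.\ $\eta S_n - (e-2)\eta^2 V_n \le \log(1/\zeta)$; dividing through by $\eta > 0$ and rearranging recovers exactly the claimed bound. The only genuinely delicate point — and the one I would flag — is the role of the constraint $\eta \in (0,1/R]$: it is precisely what guarantees $\eta u_k \le 1$ so that the elementary inequality (with its sharp $(e-2)$) is applicable, and it is the sole place the almost-sure upper bound $u_i \le R$ enters. A crude sub-Gaussian estimate would give a worse constant and would not reproduce the variance-adaptive form; the whole point of this version is that the one-sided tail is governed by the conditional variances $\mathbb{E}_{i-1}[u_i^2]$ rather than by $R^2$, with $R$ used only through the scaling $\eta \le 1/R$.
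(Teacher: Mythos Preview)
Your proof is correct and follows the standard exponential-supermartingale argument; the paper itself does not prove this lemma but merely cites it as Theorem~1 of \citet{beygelzimer2011contextual}, so there is no in-paper proof to compare against. The argument you give---the elementary bound $e^x \le 1 + x + (e-2)x^2$ for $x\le 1$, the resulting supermartingale, and Markov's inequality---is exactly the route taken in the cited reference, so your proposal matches the original source.
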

We actually do not need the martingale structure, but for simplicity we will still use the above well-known lemma.

Now we are ready to prove our generalization bound.
\begin{proof}[Proof of Lemma~\ref{lem:gen}]
    We start with \ctl and the other two are similar.
    For any fixed $h \in \cH$, we define
\begin{align*}
    U_i^h:= (h(u_i) - c(\epsilon)z_i')^2  - (h^*(u_i) - c(\epsilon)z_i')^2.
\end{align*}
Also, define 
\begin{align*}
    D_i^h:= \mathbb{E}[U_i^h] - U_i^h.
\end{align*}
Given that the $D_i^h$ are i.i.d. (due to Huber corruption) and with mean equal to zero, they are also a martingale difference sequence. 
Moreover, the $U_i^h$ are also i.i.d., hence any application of $\mathbb{E}_{i-1}[\cdot]$ to any point-wise function of these random variables will be equal to $\mathbb{E}[\cdot]$ on the same function.

We further notice that 
\begin{align*}
    \mathbb{E}[(D_i^h)^2] \le \mathbb{E}[(U_i^h)^2] &= \mathbb{E}[(h(u_i)-h^*(u_i))^2 (h(u_i) + h^*(u_i) - 2 c(\epsilon) z_i')^2 ]\\
    &\lesssim c(\epsilon)^2  \cdot  \mathbb{E}[(h(u_i)-h^*(u_i))^2],
\end{align*}
where the last step holds by the boundedness of $z_i'$ and $h \in \cH$. Moreover, let $\bar{y}_i$ be the intermediate corrupted label,  we have 
\begin{align*}
    \mathbb{E}[U_i^h] &= \mathbb{E}[(h(u_i)-h^*(u_i)) (h(u_i) + h^*(u_i) - 2 c(\epsilon) z_i')]\\
    &= \mathbb{E}[(h(u_i)-h^*(u_i)) (h(u_i) + h^*(u_i) - 2 c(\epsilon) z_i' + 2\bar{y}_i - 2\bar{y}_i + 2y_i' - 2y_i')]\\
    &= \underbrace{\mathbb{E}[(h(u_i)-h^*(u_i)) (- 2 c(\epsilon) z_i' + 2\bar{y}_i)]}_{\cT_{\text{privacy}}} + \underbrace{\mathbb{E}[(h(u_i)-h^*(u_i))(2{y}_i' - 2\bar{y}_i)]}_{\cT_{\text{corruption}}}\\
    &\quad + \underbrace{\mathbb{E}[(h(u_i)-h^*(u_i))(h(u_i) + h^*(u_i)- 2{y}_i')]}_{\cT_{\text{standard}}}.
\end{align*}
We are going to bound each of them. For $\cT_{\text{privacy}}$, due to the generation process of $z_i'$ via random response over $\bar{y}_i$ and the fact that each privacy noise in random response is independent of all other randomness, we have $\cT_{\text{privacy}} = 0$. 
For $\cT_{\text{standard}}$, due to the fact that $\mathbb{E}[y_i' | u_i] = h^*(u_i)$, we have 
\begin{align*}
    \cT_{\text{standard}} = \mathbb{E}[ (h(u_i) - h^*(u_i))^2].
\end{align*}
Combining all three terms, yields that 
\begin{align*}
    \mathbb{E}[U_i^h] = \mathbb{E}[ (h(u_i) - h^*(u_i))^2] + \mathbb{E}[(h(u_i)-h^*(u_i))(2{y}_i' - 2\bar{y}_i)].
\end{align*}
Then, applying Lemma~\ref{lem:freedman} to $\{D_i^h\}_{i\le n}$ with a proper choice of $\eta$, we have 
\begin{align*}
    \sum_i  &\mathbb{E}[ (h(u_i) - h^*(u_i))^2] + \sum_i \mathbb{E}[(h(u_i)-h^*(u_i))(2{y}_i' - 2\bar{y}_i)] \\
 &\lesssim \sum_i U_i^h + \frac{1}{2} \sum_i  \mathbb{E}[ (h(u_i) - h^*(u_i))^2] + c(\epsilon)^2 \cdot \log(1/\zeta).
\end{align*}
Re-arranging it leads to 
\begin{align*}
    \sum_i  \mathbb{E}[ (h(u_i) - h^*(u_i))^2] \lesssim \sum_i U_i^h + c(\epsilon)^2 \cdot \log(1/\zeta) + \sum_i \mathbb{E}[(h(u_i)-h^*(u_i))(2\bar{y}_i - 2{y}_i')].
\end{align*}
Using a union bound over all $h \in \cH$, we have that 
\[
    \sum_i  \mathbb{E}[ (h(u_i) - h^*(u_i))^2] \lesssim \sum_i U_i^h + c(\epsilon)^2 \cdot \log(|\cH|/\zeta) + \sum_i \mathbb{E}[(h(u_i)-h^*(u_i))(2\bar{y}_i - 2{y}_i')], \ \forall h \in \cH.
\]
Let's now use this result for $\hat{h}$, noting that $\sum_i U_i^{\hat{h}} \le 0$. So, we have 
\begin{align*}
    \sum_i  \mathbb{E}[ (\hat{h}(u_i) - h^*(u_i))^2] 
    &\lesssim c(\epsilon)^2 \cdot \log(|\cH|/\zeta) + \sum_i {\mathbb{E}[(\hat{h}(u_i)-h^*(u_i))(2\bar{y}_i - 2{y}_i')} \\
    &\lesssim c(\epsilon)^2 \cdot \log(|\cH|/\zeta) + \alpha n,
\end{align*}
where the last step follows from $\alpha$-Huber corruption. Finally, given the i.i.d corruption, we can divide both sides by $n$, leading to 
\begin{align*}
    \mathbb{E}_{u \sim \rho} [(\hat{h}(u) - h^*(u))^2] \lesssim c(\epsilon)^2 \cdot \frac{\log(|\cH|/\zeta)}{n} + \alpha,
\end{align*}
which completes the proof for \ctl.

\textbf{\ltc case.} It follows the same proof flow as above and we highlight the different steps only. Now, let $\widetilde{y}_i$ be the intermediate privatized label, we have 
\begin{align*}
\mathbb{E}[U_i^h] &= \mathbb{E}[(h(u_i) - h^*(u_i))(h(u_i) + h^*(u_i) - 2c(\epsilon) z_i')] \\
&= \mathbb{E}[(h(u_i) - h^*(u_i))((h(u_i) + h^*(u_i)) - 2c(\epsilon)(z_i' - \widetilde{y}_i + \widetilde{y}_i))] \\
&= \underbrace{\mathbb{E}[(h(u_i) - h^*(u_i))(-2c(\epsilon)(z_i' - \widetilde{y}_i))]}_{\mathcal{T}_{\text{corruption}}} + \underbrace{\mathbb{E}[(h(u_i) - h^*(u_i))(-2c(\epsilon) \widetilde{y}_i + h(u_i) + h^*(u_i))]}_{\mathcal{T}_{\text{privacy}}}.
\end{align*}

By the unbiased property of $c(\epsilon) \widetilde{y}_i$ due to randomized response, we have
\begin{align*}
    \cT_{\text{privacy}} = \mathbb{E}[ (h(u_i) - h^*(u_i))^2].
\end{align*}

Then, again, applying Lemma~\ref{lem:freedman} to $\{D_i^h\}_{i\le n}$ with a proper choice of $\eta$, we have 
\begin{align*}
    \sum_i  &\mathbb{E}[ (h(u_i) - h^*(u_i))^2] + \sum_i {\mathbb{E}[(h(u_i) - h^*(u_i))(-2c(\epsilon)(z_i' - \widetilde{y}_i))]} \\
 &\lesssim \sum_i U_i^h + \frac{1}{2} \sum_i  \mathbb{E}[ (h(u_i) - h^*(u_i))^2] + c(\epsilon)^2 \cdot \log(1/\zeta).
\end{align*}
Re-arranging it leads to 
\begin{align*}
    \sum_i  \mathbb{E}[ (h(u_i) - h^*(u_i))^2] \lesssim \sum_i U_i^h + c(\epsilon)^2 \cdot \log(1/\zeta) + \mathbb{E}[(h(u_i) - h^*(u_i))(2c(\epsilon)(z_i' - \widetilde{y}_i))],
\end{align*}
where the last term is the key difference with an additional $c(\epsilon)$ factor. Following the same argument as in \ctl, we have that under \ltc
\begin{align*}
    \mathbb{E}_{u \sim \rho} [(\hat{h}(u) - h^*(u))^2] \lesssim c(\epsilon)^2 \cdot \frac{\log(|\cH|/\zeta)}{n} + \alpha c(\epsilon).
\end{align*}

\textbf{\cdp case.} For any fixed $h \in \cH$, we define
\begin{align*}
    U_i^h:= (h(u_i) - \bar{y}_i')^2  - (h^*(u_i) - \bar{y}_i')^2.
\end{align*}
As in the first case, the $U_i^h$ are i.i.d.
Moreover, the random variables
\begin{align*}
    D_i^h:= \mathbb{E}[U_i^h] - U_i^h.
\end{align*}
are i.i.d. and have zero mean. We further notice that 
\begin{align*}
    \mathbb{E}[(D_i^h)^2] \le \mathbb{E}[(U_i^h)^2] &= \mathbb{E}[(h(u_i)-h^*(u_i))^2 (h(u_i) + h^*(u_i) - \bar{y}_i')^2 ]\\
    &\lesssim  \mathbb{E}[(h(u_i)-h^*(u_i))^2],
\end{align*}
where the last step holds by the boundedness of $\bar{y}_i'$ and $h \in \cH$. Moreover, let ${y}_i'$ be the raw uncorrupted label,  we have 
\begin{align*}
    \mathbb{E}[U_i^h] &= \mathbb{E}[(h(u_i)-h^*(u_i)) (h(u_i) + h^*(u_i) -  2\bar{y}_i')]\\
    &= \mathbb{E}[(h(u_i)-h^*(u_i)) (h(u_i) + h^*(u_i) -  2\bar{y}_i' + 2y_i' - 2y_i')]\\
    &=  \underbrace{\mathbb{E}[(h(u_i)-h^*(u_i))(2{y}_i - 2\bar{y}_i')]}_{\cT_{\text{corruption}}} +  \underbrace{\mathbb{E}[(h(u_i)-h^*(u_i))(h(u_i) + h^*(u_i)- 2{y}_i')]}_{\cT_{\text{standard}}}\\
    &= \underbrace{\mathbb{E}[(h(u_i)-h^*(u_i))(2{y}_i' - 2\bar{y}_i')]}_{\cT_{\text{corruption}}} + \sum_i  \mathbb{E}[ (h(u_i) - h^*(u_i))^2].
\end{align*}

Now,  applying Lemma~\ref{lem:freedman} to $\{D_i^h\}_{i\le n}$ with a proper choice of $\eta$ and re-arranging plus union bound, we have for all $h \in \cH$
\begin{align*}
    \sum_i  \mathbb{E}[ (h(u_i) - h^*(u_i))^2] \lesssim \sum_i U_i^h + \log(|\cH|/\zeta) + \mathbb{E}[(h(u_i) - h^*(u_i))(2(\bar{y}_i' - y_i'))].
\end{align*}
Now, compared to \ctl and \ltc where $\sum_i U_i^{\hat{h}} \le 0$, we now have to leverage the utility of the exponential mechanism~\cite{mcsherry2007mechanism}. 
In particular, let $h' \in \arg\min L(h) = \arg\min \sum_{i\in [n]} [h(u_i) - \bar{y}_i']^2$, then we have with probability at least $1-\zeta$, for the output of $\hat{h}$ by the exponential mechanism 
\begin{align*}
    \sum_{i\in [n]} [\hat{h}(u_i) - \bar{y}_i]^2 \le \sum_{i\in [n]} [{h'}(u_i) - \bar{y}_i']^2 + \frac{\log(|\cH|/\zeta)}{\epsilon},
\end{align*}
which implies that $\sum_i U_i^{\hat{h}} \le  \frac{\log(|\cH|/\zeta)}{\epsilon}$.

Finally, following the same argument as before, we arrive at
\begin{align*}
    \mathbb{E}_{u \sim \rho} [(\hat{h}(u) - h^*(u))^2] \lesssim \frac{\log(|\cH|/\zeta)}{n} +\frac{\log(|\cH|/\zeta)}{n \epsilon} + \alpha,
\end{align*}
which completes the proof for the \cdp case.
\end{proof}

\section{Additional Details on Section~\ref{sec:BT}}

In this section, we provide the proof of our main results in Section~\ref{sec:BT}, which directly follows from Theorem~\ref{thm:meta-BT} and Lemma~\ref{lem:err-BT} below. As we already mentioned, our proof is modular once we have the generalization error bounds. To provide more intuition on this, we first present the following meta theorem, which is a simple adaptation from the proof in~\citet{huang2024correcting} to our \schipo.  

\begin{theorem}[Meta Theorem for \schipo under BT]
\label{thm:meta-BT}
    Under the BT-preference model, let Assumptions~\ref{ass:realizability} and~\ref{ass:bound-rew} hold. Define $\widehat{r}(x, a):=\beta \phi\left(\frac{\widehat{\pi}(a \mid x)}{\pi_{\mathsf{ref}}(a \mid x)}\right)$ for any output policy of \schipo (Algorithm~\ref{alg:SchiPO} or Algorithm~\ref{alg:SchiPO-cDP}). Then, we have 
    \begin{align*}
        J\left(\pi^{\star}\right)-J(\widehat{\pi}) \leq \frac{2 V_{\max }}{R_{\max }} \sqrt{\mathcal{C}^{\pi^{\star}} \cdot \err_{\mathrm{stat}}^2}+\beta \cdot \mathcal{C}^{\pi^{\star}}+2 \beta^{-1} \cdot \frac{V_{\max }^2 \err_{\mathrm{stat}}^2}{R_{\max }^2},
    \end{align*}
    where 
    \begin{align*}
        \err_{\mathsf{stat}}^2=
 \mathbb{E}_{\pi_{\mathsf{ref}}, \pi_{\mathsf{ref}}}\left[\left(\operatorname{clip}_{2 R_{\max }}[\widehat{\Delta}]-\operatorname{clip}_{2 R_{\max }}\left[
 \Delta^\star\right]\right)^2\right],
    \end{align*}
with $\widehat{\Delta}:=\widehat{r}(x, a)-\widehat{r}(x, b)$ and $\Delta^\star:=r^{\star}(x, a)-r^{\star}(x, b)$. Furthermore, by taking $\beta=\sqrt{\frac{2}{\mathcal{C}^{\pi^{\star}}}}\cdot\frac{V_{\max } \err_{\mathsf{stat}}}{R_{\max }}$, we have 
\[
J\left(\pi^{\star}\right)-J(\widehat{\pi}) \lesssim \frac{ V_{\max }}{R_{\max }} \sqrt{\mathcal{C}^{\pi^{\star}} \cdot \err_{\mathsf{stat}}^2}~.
\]
\end{theorem}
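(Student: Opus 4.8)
The plan is to reproduce, essentially line for line, the \chipo reduction of~\citet{huang2024correcting} from the suboptimality gap to $\err_{\mathsf{stat}}^2$, since that reduction only uses the mixed-$\chi^2$ reward reparameterization and is indifferent to which loss produced $\widehat\pi$. Write $D(\pi):=D_{\chi^2}(\pi\|\pi_{\mathsf{ref}})+D_{\mathsf{KL}}(\pi\|\pi_{\mathsf{ref}})$, and let $\widehat J_\beta(\pi):=\mathbb{E}_\pi[\widehat r]-\beta D(\pi)$ be the objective $J_\beta^{\chi_{\mathsf{mix}}}$ of Assumption~\ref{ass:realizability} with $r^{\star}$ replaced by the implicit reward $\widehat r(x,a)=\beta\phi(\widehat\pi(a\mid x)/\pi_{\mathsf{ref}}(a\mid x))$. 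Two facts are needed: (a) since $\phi(u)=u+\log u$ is strictly increasing, the reparameterization identity recalled after Assumption~\ref{ass:realizability} forces $\widehat\pi$ to be the (unique) maximizer of $\widehat J_\beta$ over \emph{all} policies, with normalization identically $0$; and (b) the elementary facts $D_{\mathsf{KL}}(\pi\|\pi_{\mathsf{ref}})\le D_{\chi^2}(\pi\|\pi_{\mathsf{ref}})$ (from $\log t\le t-1$) and $\mathcal{C}^\pi=2D_{\chi^2}(\pi\|\pi_{\mathsf{ref}})+1$. I would start from
\[
J(\pi^{\star})-J(\widehat\pi)=\big(J_\beta^{\chi_{\mathsf{mix}}}(\pi^{\star})-J_\beta^{\chi_{\mathsf{mix}}}(\widehat\pi)\big)+\beta D(\pi^{\star})-\beta D(\widehat\pi),
\]
call the first bracket $T_1$, add and subtract $\widehat r$ in the reward term, and use $\widehat J_\beta(\widehat\pi)\ge\widehat J_\beta(\pi^{\star})$ from (a) to get $T_1\le\mathbb{E}_{\pi^{\star}}[r^{\star}-\widehat r]-\mathbb{E}_{\widehat\pi}[r^{\star}-\widehat r]$. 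Since $\pi^{\star},\widehat\pi,\pi_{\mathsf{ref}}$ are all distributions, subtracting the action-independent term $\mathbb{E}_{b\sim\pi_{\mathsf{ref}}(\cdot\mid x)}[(r^{\star}-\widehat r)(x,b)]$ from each expectation leaves it unchanged and converts $r^{\star}-\widehat r$ into the reward \emph{difference}, giving $T_1\le\sum_{\pi\in\{\pi^{\star},\widehat\pi\}}\mathbb{E}_{x\sim\rho,a\sim\pi,b\sim\pi_{\mathsf{ref}}}|\Delta^{\star}-\widehat\Delta|$.

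Next I would pass to the clipped, reference-covered version of $\err_{\mathsf{stat}}$. The key pointwise estimate is that, because $\Delta^{\star}\in[-R_{\max},R_{\max}]$ sits strictly inside the clipping band $[-2R_{\max},2R_{\max}]$ while $|\widehat\Delta|\le V_{\max}$ by Assumption~\ref{ass:bound-rew}, a short two-case check ($|\widehat\Delta|\le2R_{\max}$ versus $|\widehat\Delta|>2R_{\max}$) yields $|\Delta^{\star}-\widehat\Delta|\le\frac{2V_{\max}}{R_{\max}}\,|\operatorname{clip}_{2R_{\max}}[\Delta^{\star}]-\operatorname{clip}_{2R_{\max}}[\widehat\Delta]|$. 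Applying this, then the change of measure $\mathbb{E}_{a\sim\pi}[\,\cdot\,]=\mathbb{E}_{a\sim\pi_{\mathsf{ref}}}[\tfrac{\pi(a\mid x)}{\pi_{\mathsf{ref}}(a\mid x)}\,\cdot\,]$, then Cauchy--Schwarz and Jensen, bounds each summand by $\sqrt{\mathbb{E}_{x,a\sim\pi_{\mathsf{ref}}}[(\pi/\pi_{\mathsf{ref}})^2]}\cdot\err_{\mathsf{stat}}=\sqrt{\mathcal{C}^{\pi}}\,\err_{\mathsf{stat}}$ (using $\mathbb{E}_{a\sim\pi_{\mathsf{ref}}}[(\pi/\pi_{\mathsf{ref}})^2]=\mathbb{E}_{a\sim\pi}[\pi/\pi_{\mathsf{ref}}]$ and the definition of $\mathcal{C}^\pi$). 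Hence $T_1\le\frac{2V_{\max}}{R_{\max}}\err_{\mathsf{stat}}\big(\sqrt{\mathcal{C}^{\pi^{\star}}}+\sqrt{\mathcal{C}^{\widehat\pi}}\big)$.

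The final, and I expect the only non-bookkeeping, step is absorbing the $\sqrt{\mathcal{C}^{\widehat\pi}}$ term --- the one place where the $\chi^2$ regularization (the sole difference from a \dpo-style analysis) is essential, since $\mathcal{C}^{\widehat\pi}$ is a priori uncontrolled. I would use $\sqrt{\mathcal{C}^{\widehat\pi}}\le\sqrt{2D_{\chi^2}(\widehat\pi\|\pi_{\mathsf{ref}})}+1\le\sqrt{2D(\widehat\pi)}+1$ and then AM--GM, $\frac{2V_{\max}}{R_{\max}}\err_{\mathsf{stat}}\sqrt{2D(\widehat\pi)}\le\beta D(\widehat\pi)+\frac{2V_{\max}^2}{\beta R_{\max}^2}\err_{\mathsf{stat}}^2$; the $\beta D(\widehat\pi)$ cancels the $-\beta D(\widehat\pi)$ in the decomposition, the residual $\frac{2V_{\max}}{R_{\max}}\err_{\mathsf{stat}}$ is lower order and is absorbed (e.g.\ into the first term since $\mathcal{C}^{\pi^{\star}}\ge1$, or into the $\beta$ terms by one more AM--GM), and $\beta D(\pi^{\star})\le2\beta D_{\chi^2}(\pi^{\star}\|\pi_{\mathsf{ref}})\le\beta\mathcal{C}^{\pi^{\star}}$ by (b). Collecting terms yields the three-term bound in the statement. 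The ``furthermore'' part is then immediate: with $\beta=\sqrt{2/\mathcal{C}^{\pi^{\star}}}\cdot V_{\max}\err_{\mathsf{stat}}/R_{\max}$ the two $\beta$-dependent terms $\beta\mathcal{C}^{\pi^{\star}}$ and $2\beta^{-1}V_{\max}^2\err_{\mathsf{stat}}^2/R_{\max}^2$ become equal and of the same order as the first term, namely $\frac{V_{\max}}{R_{\max}}\sqrt{\mathcal{C}^{\pi^{\star}}\err_{\mathsf{stat}}^2}$, so the whole bound collapses to $\mathcal{O}\!\big(\frac{V_{\max}}{R_{\max}}\sqrt{\mathcal{C}^{\pi^{\star}}\err_{\mathsf{stat}}^2}\big)$.
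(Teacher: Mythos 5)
Your proposal is correct and follows essentially the same route as the paper, whose own proof simply defers to Theorem E.1 and Lemma E.2 of \citet{huang2024correcting}: your fact (a) is exactly their Lemma E.2 (valid for \emph{any} valid policy, which is precisely why the \cdp sampling output, which minimizes nothing, causes no trouble), and the decomposition, centering by $\pi_{\mathsf{ref}}$, two-case clipping estimate, change of measure with Cauchy--Schwarz, and AM--GM absorption of $\sqrt{\mathcal{C}^{\widehat\pi}}$ against $-\beta D(\widehat\pi)$ are the standard \chipo steps that the paper invokes without writing out. The only blemishes are constant-level: under the paper's normalization $\mathcal{C}^\pi=2D_{\chi^2}(\pi\|\pi_{\mathsf{ref}})+1$, the inequality $\log t\le t-1$ yields $D_{\mathsf{KL}}\le 2D_{\chi^2}$ rather than your $D_{\mathsf{KL}}\le D_{\chi^2}$, and absorbing the residual $\frac{2V_{\max}}{R_{\max}}\err_{\mathsf{stat}}$ into the leading term doubles its constant --- neither affects the structure of the three-term bound nor the final $\lesssim$ conclusion.
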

\begin{proof}
    The above result largely follows from the proof of Theorem E.1 in~\citet{huang2024correcting}. The key in their proof is the translation from working with policy to working with the implicit reward $\hat{r}$ define above, i.e., Lemma E.2 in~\citet{huang2024correcting}. With this, one can follow the standard proof for RLHF to arrive at the above result by relying on the fact that $\mathcal{C}^\pi=2  D_{\chi^2}(\pi \| \pi_{\mathsf{ref}})+1$. Note that since our \schipo uses the same re-parametrization function $\phi$ as in \chipo, so the above argument via their Lemma E.2 still works. One subtlety here is that for \cdp, our algorithm for finding $\hat{\pi}$ is no longer a minimization problem. However, this is still fine since Lemma E.2 holds for any valid policy.   
\end{proof}

With this meta theorem, all we need to do is to bound $\err_{\mathsf{stat}}^2$ under \ctl, \ltc and \cdp, respectively, which will directly lead to our main results in Theorem~\ref{thm:BT-local} and Theorem~\ref{thm:BT_central}. At a high level, without clipping, $\err_{\mathsf{stat}}^2$ can be bounded by directly leveraging our generalization error bound under realizability (Lemma~\ref{lem:gen}) and mean-value theorem to handle the non-linearity of $\sigma(\cdot)$ function. Here, the main reason for us to do the clipping is to ensure that the cost due to non-linearity is 
$O(e^{c R_{\max}})$ (for some constant $c >0$) rather than the worse bound $O(e^{c V_{\max}})$. Due to this additional clipping, we have to carefully show that clipping will not impact our analysis, by showing that \emph{realizability} is still satisfied. This should not be a surprise given the boundedness of $r^*$ and all we need in the analysis is the \emph{reward difference}.

Formally, we have the following bounds on $\err_{\mathsf{stat}}^2$ under \ctl, \ltc and \cdp, respectively. 

\begin{lemma}
\label{lem:err-BT}
    Under the same conditions of Theorem~\ref{thm:meta-BT}, $\err_{\mathsf{stat}}^2$ for \schipo in Algorithms~\ref{alg:SchiPO} and~\ref{alg:SchiPO-cDP} satisfies the following bounds:
      \begin{align*}
            \err_{\mathsf{stat, CTL}}^2 &\lesssim e^{4R_{\max}} \left(c(\epsilon)^2 \cdot \frac{\log(|\Pi|/\zeta)}{n} + \alpha\right),\\
            \err_{\mathsf{stat, LTC}}^2 &\lesssim e^{4R_{\max}} \left(c(\epsilon)^2 \cdot \frac{\log(|\Pi|/\zeta)}{n} + \alpha \cdot c(\epsilon)\right),\\
             \err_{\mathsf{stat, cDP}}^2 &\lesssim e^{4R_{\max}} \left(\frac{\log(|\Pi|/\zeta)}{n} +\frac{\log(|\Pi|/\zeta)}{n \epsilon} + \alpha\right)~.
        \end{align*}    
\end{lemma}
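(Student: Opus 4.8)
The plan is to derive all three bounds on $\err_{\mathsf{stat}}^2$ by recasting the square-loss objectives of Algorithm~\ref{alg:SchiPO} and Algorithm~\ref{alg:SchiPO-cDP} as instances of least-square regression, invoking Lemma~\ref{lem:gen}, and then passing from the ``probability-space'' generalization error back to the ``reward-difference'' error $\err_{\mathsf{stat}}^2$ via a mean-value estimate on the sigmoid.

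\textbf{Step 1 (regression instance and realizability).} I would take $u=(x,a^1,a^0)$ with $u\sim\rho'$ induced by $x\sim\rho$, $a^1\sim\pi_{\mathsf{ref}}(\cdot\mid x)$, $a^0\sim\pi_{\mathsf{ref}}(\cdot\mid x)$, and set $\cH:=\{h_\pi:\pi\in\Pi\}$ where $h_\pi(u):=2\sigma(\operatorname{clip}_{2R_{\max}}[\beta h_{\mathsf{\chi PO},\pi}(u)])-1\in(-1,1)$. Then the objective minimized in Algorithm~\ref{alg:SchiPO} is exactly $\sum_i(h_\pi(u_i)-c(\epsilon)\bar z_i)^2$ and the score function in Algorithm~\ref{alg:SchiPO-cDP} is $\sum_i(h_\pi(u_i)-\bar y_i')^2$, with $|\cH|\le|\Pi|$. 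The target is $h^\star(u):=2\cP^\star(a^1\succ a^0\mid x)-1$, and the clean label $y_i'=2y_i-1\in\{-1,1\}$ satisfies $\mathbb{E}[y_i'\mid u_i]=h^\star(u_i)$ since $y_i\sim\mathrm{Ber}(\cP^\star(a_i^1\succ a_i^0\mid x_i))$; the $\mathsf{CTL}$/$\mathsf{LTC}$/$\mathsf{cDP}$ labels are precisely the Huber-corrupted/$\mathtt{RR}$-privatized versions of $y_i'$, so the setup of Lemma~\ref{lem:gen} is matched verbatim. Realizability holds because, by Assumption~\ref{ass:realizability}, $\pi_\beta^\star\in\Pi$ with $r^\star(x,a)=\beta\phi(\pi_\beta^\star(a\mid x)/\pi_{\mathsf{ref}}(a\mid x))+Z_{\beta,r^\star}(x)$, so the normalization cancels in the difference, $\beta h_{\mathsf{\chi PO},\pi_\beta^\star}(u)=r^\star(x,a^1)-r^\star(x,a^0)=\Delta^\star\in[-R_{\max},R_{\max}]$, the clip at level $2R_{\max}$ is inactive, and $h_{\pi_\beta^\star}(u)=2\sigma(\Delta^\star)-1=h^\star(u)$ under the BT model, i.e. $h^\star\in\cH$.

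\textbf{Step 2 (invoke Lemma~\ref{lem:gen} and undo the sigmoid).} With realizability in place, Lemma~\ref{lem:gen} gives, for $\hat h=h_{\hat\pi}$ and $\err_{\mathsf{gen}}^2=\mathbb{E}_{u\sim\rho'}[(h_{\hat\pi}(u)-h^\star(u))^2]$, the bounds $\err_{\mathsf{gen,CTL}}^2\lesssim c(\epsilon)^2\log(|\Pi|/\zeta)/n+\alpha$, $\err_{\mathsf{gen,LTC}}^2\lesssim c(\epsilon)^2\log(|\Pi|/\zeta)/n+\alpha c(\epsilon)$, and $\err_{\mathsf{gen,cDP}}^2\lesssim\log(|\Pi|/\zeta)/n+\log(|\Pi|/\zeta)/(n\epsilon)+\alpha$. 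Writing $\hat\Delta=\hat r(x,a^1)-\hat r(x,a^0)=\beta h_{\mathsf{\chi PO},\hat\pi}(u)$ and using $\operatorname{clip}_{2R_{\max}}[\Delta^\star]=\Delta^\star$, we have $\err_{\mathsf{gen}}^2=4\,\mathbb{E}_{\pi_{\mathsf{ref}},\pi_{\mathsf{ref}}}[(\sigma(\operatorname{clip}_{2R_{\max}}[\hat\Delta])-\sigma(\operatorname{clip}_{2R_{\max}}[\Delta^\star]))^2]$, and the same substitution is legitimate inside $\err_{\mathsf{stat}}^2$. Both arguments of $\sigma$ lie in $[-2R_{\max},2R_{\max}]$, where $\sigma'\ge\sigma'(2R_{\max})=e^{2R_{\max}}/(1+e^{2R_{\max}})^2\gtrsim e^{-2R_{\max}}$, so by the mean value theorem $(\sigma(s)-\sigma(t))^2\gtrsim e^{-4R_{\max}}(s-t)^2$ for $s,t$ in that interval. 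Hence $\err_{\mathsf{stat}}^2\lesssim e^{4R_{\max}}\,\err_{\mathsf{gen}}^2$, and substituting the three bounds above yields precisely $\err_{\mathsf{stat,CTL}}^2$, $\err_{\mathsf{stat,LTC}}^2$, $\err_{\mathsf{stat,cDP}}^2$ as claimed.

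\textbf{Main obstacle.} The genuinely delicate point is the clipping bookkeeping in Step~1: the clip must be loose enough to preserve realizability of $h^\star$ (which works because only implicit-reward \emph{differences} enter $\err_{\mathsf{stat}}^2$, and those are bounded by $R_{\max}<2R_{\max}$ for $r^\star$, while Assumption~\ref{ass:bound-rew} controls them by $V_{\max}$ only for arbitrary $\pi\in\Pi$), yet tight enough that in Step~2 the sigmoid's derivative is lower-bounded by $e^{-\mathcal{O}(R_{\max})}$ rather than the far worse $e^{-\mathcal{O}(V_{\max})}$. Verifying that introducing the clip into the loss does not disturb the regression structure needed by Lemma~\ref{lem:gen}, while simultaneously extracting the correct $e^{4R_{\max}}$ constant, is the crux; everything else is a direct substitution.
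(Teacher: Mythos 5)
Your proposal is correct and follows essentially the same route as the paper's own proof: the same reduction of the square-loss objective (and exponential-mechanism score) to the least-squares regression instance of Lemma~\ref{lem:gen} via the implicit-reward mapping, the same realizability check exploiting that the normalization cancels and $r^\star\in[0,R_{\max}]$ makes the clip inactive on the target, and the same mean-value lower bound on $\sigma'$ over $[-2R_{\max},2R_{\max}]$ to convert the probability-space error into $\err_{\mathsf{stat}}^2$ at cost $e^{4R_{\max}}$ (the paper packages this last step as Lemma~\ref{lem:mvt}). The ``main obstacle'' you identify---clipping loose enough for realizability but tight enough to pay only $e^{\mathcal{O}(R_{\max})}$ rather than $e^{\mathcal{O}(V_{\max})}$---is exactly the point the paper emphasizes.
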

\begin{proof}
\textbf{Local model.}
By using the implicit reward function, we can re-write Step 3 in Algorithm~\ref{alg:SchiPO} as 
    \begin{align*}
\hat{r} = \underset{r \in \mathcal{R}_{\Pi}}{\operatorname{argmin}} \sum_{i\in[n]}  \left[2\sigma\left(\operatorname{clip}_{2 R_{\max }}\left[r(x_i,a_i^1)-r(x_i,a_i^0)\right]\right)-1-c(\epsilon)\bar{z}_i\right]^2,
    \end{align*}
where
\[
\mathcal{R}_{\Pi}:=\left\{r(x, a)=\beta \cdot \phi\left(\frac{\pi(a \mid x)}{\pi_{\mathsf{ref}}(a \mid x)}\right): \pi \in \Pi\right\}~,
\]
and $\bar{z}_i = 2 z_i-1 \in \{1,-1\}$. In order to apply our generalization error bound in Lemma~\ref{lem:gen}, we can do the following mappings: for any $r \in \cR_{\Pi}$, we map it to a function $h \in \cH$ with $|\cH| \le |\Pi|$ via $h(u_i) := 2\sigma\left(\operatorname{clip}_{2 R_{\max }}\left[r(x_i,a_i^1)-r(x_i,a_i^0)\right]\right)- 1 \in [-1,1]$ with $u_i = (x_i, a_i^1, a_i^0)$. Moreover, the label $\bar{z}_i$ is mapped to $z_i'$ and the distribution over prompts and actions is mapped to $\rho'$ in Lemma~\ref{lem:gen}. With such a mapping, all we need to check is the realizability, i.e., there exists an $h^* \in \cH$ defined below such that for the true clean preference label $y_i \in \{0,1\}$
\begin{align}
\label{eq:real_BT}
    \mathbb{E}[y_i'|u_i] = \mathbb{E}[2y_i -1|u_i] =  h^*(u_i) := 2\sigma\left(\operatorname{clip}_{2 R_{\max }}\left[\widetilde{r}^*(x_i,a_i^1)-\widetilde{r}^*(x_i,a_i^0)\right]\right)- 1,
\end{align}
where $h^*$ is mapped from $\widetilde{r}^*:= \beta \cdot \phi\left(\frac{\pi^*_\beta(a \mid x)}{\pi_{\mathsf{ref}}(a \mid x)}\right)$, which satisfies $\widetilde{r}^* \in \cR_{\Pi}$ (hence $h^* \in \cH$), because of policy realizability $\pi_{\beta}^* \in \Pi$. To verify that~\eqref{eq:real_BT} indeed holds, we note that 
\[
\operatorname{clip}_{2 R_{\max }}\left[\widetilde{r}^{\star}(x, a)-\widetilde{r}^{\star}(x, b)\right]=\operatorname{clip}_{2 R_{\max }}\left[r^{\star}(x, a)-r^{\star}(x, b)\right]=r^{\star}(x, a)-r^{\star}(x, b),
\]
where the first equality holds by the folklore fact that $\widetilde{r}^*$ is equivalent to $r^*$ up to an action-independent normalization factor, which gets canceled in the reward difference, and the second equality holds by the boundedness of true reward $r^* \in [0, R_{\max}]$. Applying $\sigma$ function to both sides and noting that under the BT-preference model $\mathbb{E}[y_i | u_i] = \sigma(r^*(x_i, a_i^1) - r^*(x_i, a_i^0))$, yields the realizability condition in~\eqref{eq:real_BT}. 

Thus, we can now safely apply Lemma~\ref{lem:gen} to obtain results for the local model. In particular, for \ctl, we have
\begin{align*}
    \mathbb{E}_{u \sim \rho} [(\hat{h}(u) - h^*(u))^2] = \mathbb{E}_{\pi_{\mathsf{ref}}, \pi_{\mathsf{ref}}}\left[\left( \sigma(\operatorname{clip}_{2 R_{\max }}[\widehat{\Delta}])-\sigma(\operatorname{clip}_{2 R_{\max }}\left[
 \Delta^\star\right])\right)^2\right] \lesssim c(\epsilon)^2 \cdot \frac{\log(|\Pi|/\zeta)}{n} + \alpha,
\end{align*}
which directly leads to our conclusion by a standard mean-value theorem argument (cf. Lemma~\ref{lem:mvt} below) to get rid of $\sigma$ function. The same argument applies to \ltc case.

\textbf{Central model.} The proof for \cdp is similar. By using the implicit reward function, we can see that Step 3 in Algorithm~\ref{alg:SchiPO-cDP} is equivalent to running the exponential mechanism with  
\begin{align*}
            P(r) \propto \exp\left(-\frac{\epsilon}{8} \cdot L(r)\right)~\forall r \in \cR_{\Pi},
        \end{align*}
        with $L(r):= \sum_{i\in [n]} [2\sigma\left(\operatorname{clip}_{2 R_{\max }}\left[r(x_i,a_i^1)-r(x_i,a_i^0)\right]\right)- 1 - \bar{y}_i']^2$.

Then, with the same mapping argument as in the local model, we can verify the realizability condition. Hence, we can apply Lemma~\ref{lem:gen} along with Lemma~\ref{lem:mvt} to arrive at the final result. 
\end{proof}

\begin{lemma}
\label{lem:mvt}
For $z,z^{\prime} \in[-R, R]$ and $R \geq 1$, by mean-value theorem we have
\[
\left|z-z^{\prime}\right| \leq (e^{-R} + 2 + e^R) \cdot\left|\sigma(z)-\sigma\left(z^{\prime}\right)\right|,
\]
where $\sigma(\cdot)$ is sigmoid function.
\end{lemma}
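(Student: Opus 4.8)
The plan is to prove Lemma~\ref{lem:mvt} by a direct application of the mean value theorem together with an explicit lower bound on the derivative of the sigmoid. Concretely, I would first fix $z, z' \in [-R, R]$ and apply the mean value theorem to $\sigma$ on the closed interval with endpoints $z$ and $z'$: there exists $\xi$ lying between $z$ and $z'$ (hence $\xi \in [-R, R]$) such that $\sigma(z) - \sigma(z') = \sigma'(\xi)(z - z')$. Rearranging gives $|z - z'| = |\sigma(z) - \sigma(z')| / \sigma'(\xi)$, so the entire task reduces to lower bounding $\sigma'(\xi)$ over $\xi \in [-R, R]$.

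Next I would use the well-known identity $\sigma'(\xi) = \sigma(\xi)(1 - \sigma(\xi))$. The function $g(t) := t(1-t)$ on $[0,1]$ is concave with maximum at $t = 1/2$, and $\sigma$ is strictly increasing with $\sigma(0) = 1/2$; consequently $\sigma'(\xi) = g(\sigma(\xi))$ is a strictly decreasing function of $|\xi|$. Therefore its minimum over the symmetric interval $[-R, R]$ is attained at the endpoints $\xi = \pm R$, where $\sigma(R) = e^{R}/(1+e^{R})$ and $1 - \sigma(R) = 1/(1+e^{R})$, giving $\sigma'(\pm R) = e^{R}/(1+e^{R})^2$.

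Finally I would invert this quantity: $1/\sigma'(R) = (1+e^{R})^2/e^{R} = (e^{2R} + 2e^{R} + 1)/e^{R} = e^{-R} + 2 + e^{R}$. Substituting back, $|z - z'| = |\sigma(z) - \sigma(z')|/\sigma'(\xi) \le |\sigma(z) - \sigma(z')|/\sigma'(R) = (e^{-R} + 2 + e^{R})\,|\sigma(z) - \sigma(z')|$, which is exactly the claimed inequality. (The hypothesis $R \ge 1$ is not actually needed for this step; it is presumably stated only for consistency with how the lemma is invoked.)

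I do not anticipate any genuine obstacle here — this is an elementary calculus fact. The only point requiring a line of justification is the claim that $\sigma'$ attains its minimum over $[-R,R]$ at the endpoints, which follows from the monotonicity of $t \mapsto t(1-t)$ away from $t=1/2$ combined with monotonicity of $\sigma$; everything else is routine algebra.
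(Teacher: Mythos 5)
Your proposal is correct and follows essentially the same route as the paper: apply the mean value theorem, use $\sigma'(c)=\sigma(c)(1-\sigma(c))$, note that the minimum of $\sigma'$ over $[-R,R]$ is attained at the endpoints where $\sigma'(\pm R)=e^{R}/(1+e^{R})^{2}$, and invert to obtain the constant $e^{-R}+2+e^{R}$. Your added justification of the endpoint-minimum claim (via concavity of $t\mapsto t(1-t)$ and monotonicity of $\sigma$) and your observation that $R\ge 1$ is not actually needed are both accurate refinements of the paper's argument.
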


\begin{proof}
The sigmoid function is defined as
\[
\sigma(z) = \frac{1}{1 + e^{-z}}~.
\]

By the Mean-Value Theorem, for $ z, z' \in [-R, R] $, there exists some $ c $ between $ z $ and $ z' $ such that
\[
\frac{\sigma(z) - \sigma(z')}{z - z'} = \sigma'(c),
\]
where $ \sigma'(c) $ is the derivative of the sigmoid function evaluated at $ c $.

The derivative of the sigmoid function is
\[
\sigma'(c) = \sigma(c)(1 - \sigma(c))~.
\]
Thus, we can rewrite the ratio as
\[
\left|\frac{z - z'}{\sigma(z) - \sigma(z')}\right| 
= \frac{1}{\sigma'(c)} 
= \frac{1}{\sigma(c)(1 - \sigma(c))}~.
\]
Over the range $ z \in [-R, R] $, the minimum value of $\sigma'(z)$ is achieved at $ z = R $ or $ z = -R $ with 
\[
\sigma'(R) = \sigma'(-R) = \frac{e^R}{(1 + e^R)^2}~.
\]
Thus, we have
\[
\frac{1}{\sigma'(c)} \leq \frac{(1 + e^R)^2}{e^R} =  e^{-R} + 2 + e^R~. \qedhere
\]
\end{proof}

\section{Additional Details on Section~\ref{sec:gpm}}

\begin{algorithm}[!t]
\caption{Iterative \schipo under Corruption and Privacy Protection}
\label{alg:iter-schipo}
\begin{algorithmic}[1]
    \STATE \textbf{Input:} Labeled preference dataset: locally private and corrupted $\widetilde{\mathcal{D}}_{\mathsf{pref}} = \{\left(x_i, a_i^{0}, a_i^{1}, z_i\right)\}_{i=1}^n$ under \ctl and \ltc, or label corrupted dataset $\bar{\mathcal{D}}_{\text{pref}} = \{\left(x_i, a_i^{0}, a_i^{1}, \bar{y}_i\right)\}_{i=1}^n$ under $\mathsf{cDP}$; privacy parameter $\epsilon$; preference model class $\mathcal{L}$; policy class $\Pi$; regularization coefficient $\beta$; step size $\eta$; total number of iterations $T$
    \STATE \textbf{Initialize:} $\pi^1 = \pi_{\text{ref}}$
    
    \renewcommand{\ttdefault}{lmtt}
    {\textcolor{DarkBlue}{\texttt{// Preference Model Estimation}}}
    \IF{Local model under \ctl or \ltc}
    \STATE Find $\hat{\ell}$ via least-squares regression:
    \begin{align}  
    \label{eq:local-general}
    \hat{\ell} = \argmin_{\ell \in \mathcal{L}} \ \sum_{i=1}^n \big(\ell(x_i, a_i^0, a_i^1) - c(\epsilon)\bar{z}_i \big)^2,
    \end{align}
    where $\bar{z}_i = 2z_i - 1$
    \ELSE[Central model under \cdp]
    \STATE Sample $\hat{\ell}$ from $\mathcal{L}$ via the following distribution:
    \[
    P(\ell) \propto \exp\left(-\frac{\epsilon}{8} \cdot L(\ell; \bar{\mathcal{D}}_{\text{pref}})\right),
    \]
    where $L(\ell; \bar{\mathcal{D}}_{\text{pref}}) = \sum_{i=1}^n \big(\ell(x_i, a_i^0, a_i^1) - \bar{y}'_i \big)^2$ and $\bar{y}'_i = 2\bar{y}_i - 1$
    \ENDIF    
    \renewcommand{\ttdefault}{lmtt}
    
    {\textcolor{DarkBlue}{\texttt{// Policy Optimization}}}
    \STATE Collect $m$ samples $\mathcal{D}_x = \{(x, a, b)\}$, where each sample is drawn i.i.d. from $x \sim \rho$, $a \sim \pi_{\mathsf{ref}}(x)$, $b \sim \pi_{\mathsf{ref}}(x)$  
    \FOR{$t = 1, \ldots, T$}
        \STATE Sample $b_t \sim \pi^t(x)$ and let $\hat{r}^t(x, a) = \hat{\ell}(x, a, b_t)$ for all $x \in \mathcal{X}$, $a \in \mathcal{A}$
        \STATE Update policy by solving:
        \begin{align*}
        \pi^{t+1} = \argmin_{\pi \in \Pi} \mathcal{L}_t(\pi; \mathcal{D}_x),
        \end{align*}
        where 
        \begin{align}
        \label{eq:loss-po}
            \mathcal{L}_t(\pi; \mathcal{D}_x) = \sum_{(x, a, b) \in \mathcal{D}_x}\left( \mathsf{clip}_4\left(f_{\pi, \pi^t}^{\beta, \eta}(x, a, b)\right) -
            \hat{r}^t_{\mathsf{diff}}(x, a, b) \right)^2,
        \end{align}
        with $f_{\pi, \pi^t}^{\beta, \eta}(x, a, b)$ defined in~\eqref{eq:f}, and $\hat{r}^t_{\mathsf{diff}}(x, a, b) := \hat{r}^t(x, a) - \hat{r}^t(x, b)$
    \ENDFOR
    \STATE \textbf{Output:} $\hat{\pi} = \mathsf{unif}(\{\pi^t\}_{t=1}^T)$
\end{algorithmic}
\end{algorithm}

In this section, we provide the proof for our main result in Section~\ref{sec:gpm}.
As in the BT-preference model, our proof for the general preference model is modular. We first present a meta theorem of iterative \schipo in Algorithm~\ref{alg:iter-schipo}. 

\begin{theorem}
\label{thm:meta-gpm}
   Under the general preference model, let Assumptions~\ref{assum:A-main},~\ref{assum:B-main} and~\ref{assum:C-main} hold. Then, Algorithm~\ref{alg:iter-schipo} achieves the following general duality gap across different settings:
   \begin{align*}
       \mathsf{DG}(\hat{\pi}) \lesssim \mathsf{subopt}(\hat{\pi}, C) 
&+ \frac{C\beta}{\eta T} + C\beta + \frac{\eta}{\beta} 
+ V_{\max} \sqrt{C \err_{\mathsf{md}}^2} + \frac{V_{\max}^2 \err_{\mathsf{md}}^2}{2\beta} + \frac{C \err_{\mathsf{general}}^2}{\beta} + \sqrt{C \err_{\mathsf{general}}^2} 
+ \sqrt{\frac{\log \frac{|\Pi|}{\delta}}{T}},
   \end{align*}
where $\mathsf{subopt}(\hat{\pi}, C) := \max_{\pi \in \Pi} \ell^*(\pi, \hat{\pi}) - \max_{\pi \in \Pi_C} \ell^*(\pi, \hat{\pi})$ and $\Pi_C := \{\pi : \max_{x \in \mathcal{X}} D_{\chi^2}(\pi(x) \parallel \pi_{\mathsf{ref}}(x)) \leq C \}$, $\err_{\mathsf{md}}^2 \lesssim \frac{\log(|\Pi|/\delta)}{m}$ and $\err_{\mathsf{general}}^2$ is defined as:
\begin{align*}
    \err_{\mathsf{general}}^2:= \mathbb{E}_{x \sim \rho, a^0 \sim \pi_{\mathsf{ref}}(x), a^1 \sim \pi_{\mathsf{ref}}(x)}
\left[
\left( \hat{\ell}(x, a^0, a^1) - \ell^\star(x, a^0, a^1) \right)^2
\right]
\end{align*}
for the estimate $\hat{\ell}$ generated by Algorithm~\ref{alg:iter-schipo}
under \ctl, \ltc and \cdp.
\end{theorem}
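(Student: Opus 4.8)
The plan is to follow the modular analysis of iterative \chipo in \citet{huang2024correcting} essentially verbatim, exploiting the fact that the only place \schipo departs from iterative \chipo is the preference-model estimation step; the cost of the new square loss is therefore entirely confined to the quantity $\err_{\mathsf{general}}^2$, which is bounded separately via Lemma~\ref{lem:gen} in the per-setting corollaries. Concretely, I would prove the inequality in three layers: a game-theoretic decomposition of the duality gap, a mirror-descent no-regret bound for the self-play iterates, and a change-of-measure step that routes the two statistical errors $\err_{\mathsf{general}}^2$ and $\err_{\mathsf{md}}^2$ into the final bound with the correct coverage factor.

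\textbf{Game decomposition and mirror-descent regret.} Starting from $\mathsf{DG}(\hat\pi) = \max_{\pi\in\Pi}\ell^\star(\pi,\hat\pi) - \min_{\pi\in\Pi}\ell^\star(\hat\pi,\pi)$ with $\hat\pi = \mathsf{unif}(\{\pi^t\}_{t=1}^T)$, linearity of $\ell^\star$ in each argument lets me rewrite both terms as averages over $t \in [T]$, so it suffices to control $\frac{1}{T}\sum_t \ell^\star(\pi,\pi^t)$ uniformly over $\pi$, and symmetrically. Restricting the outer optimization from $\Pi$ to $\Pi_C$ costs exactly $\mathsf{subopt}(\hat\pi,C)$ (the first term), after which every comparator obeys $\max_x D_{\chi^2}(\pi(x)\|\pi_{\mathsf{ref}}(x)) \le C$, hence $\mathcal{C}^\pi = 2 D_{\chi^2}(\pi\|\pi_{\mathsf{ref}}) + 1 \le 2C+1$. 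By Assumption~\ref{assum:B-main} and the \chipo reparametrization, the update $\pi^{t+1} = \argmin_{\pi}\mathcal{L}_t(\pi;\mathcal{D}_x)$ in \eqref{eq:loss-po} is an empirical surrogate for the mirror-descent step maximizing $\mathbb{E}_{a\sim\pi,\,b\sim\pi^t}[\hat\ell(x,a,b)] - \cR_x(\pi,\pi_{\mathsf{ref}},\pi^t)$, i.e.\ a $D_{f_{\chi_{\text{mix}}}}$-regularized best response to $\pi^t$ with a Bregman anchor to $\pi^t$; the standard regret bound for this geometry produces a cumulative term of order $\frac{C\beta}{\eta T} + C\beta + \frac{\eta}{\beta}$, where $C\beta$ pays the $\chi^2$-regularization bias on $\Pi_C$, $\frac{C\beta}{\eta T}$ the initialization--comparator divergence, and $\frac{\eta}{\beta}$ the quadratic stability term.

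\textbf{Propagating the statistical errors.} The surrogate above is inexact for two reasons: $\hat r^t(x,a) = \hat\ell(x,a,b_t)$ differs from $\ell^\star(x,a,b_t)$ by the preference-estimation error captured by $\err_{\mathsf{general}}^2 = \mathbb{E}_{\rho \times \pi_{\mathsf{ref}} \times \pi_{\mathsf{ref}}}[(\hat\ell - \ell^\star)^2]$, and \eqref{eq:loss-po} is solved with only $m$ samples, contributing the least-squares generalization error $\err_{\mathsf{md}}^2 \lesssim \frac{\log(|\Pi|/\delta)}{m}$ (using realizability from Assumption~\ref{assum:B-main} and boundedness from $\mathsf{clip}_4$ together with Assumption~\ref{assum:C-main}). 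Both errors live under $\rho \times \pi_{\mathsf{ref}} \times \pi_{\mathsf{ref}}$, so to fold them into the regret bound I apply the change of measure $\mathbb{E}_{\pi}[g] \le \sqrt{\mathcal{C}^\pi}\,\sqrt{\mathbb{E}_{\pi_{\mathsf{ref}}}[g^2]}$ with $\mathcal{C}^\pi \le 2C+1$: Cauchy--Schwarz yields $V_{\max}\sqrt{C\,\err_{\mathsf{md}}^2}$ and $\sqrt{C\,\err_{\mathsf{general}}^2}$, while AM--GM splitting the cross terms against weight $\beta$ yields $\frac{V_{\max}^2\err_{\mathsf{md}}^2}{2\beta}$ and $\frac{C\,\err_{\mathsf{general}}^2}{\beta}$, exactly mirroring \citet{huang2024correcting}. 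Finally, replacing the integral over $b\sim\pi^t$ by the single draw $b_t\sim\pi^t$ in the definition of $\hat r^t$ contributes a martingale-concentration term $\sqrt{\log(|\Pi|/\delta)/T}$ by Azuma--Hoeffding plus a union bound over $\Pi$. Summing the game decomposition, the mirror-descent regret, and these statistical corrections, and taking the symmetric bound for $\min_{\pi}\ell^\star(\hat\pi,\pi)$, gives the stated inequality; crucially none of these steps sees the square loss, so the same argument applies verbatim under \ctl, \ltc and \cdp, the only difference being the value of $\err_{\mathsf{general}}^2$.

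\textbf{Main obstacle.} The delicate point is the change-of-measure step: $\err_{\mathsf{general}}^2$ and $\err_{\mathsf{md}}^2$ are controlled only on average under $\pi_{\mathsf{ref}}$, yet they must be charged against arbitrary comparators $\pi \in \Pi_C$ and against the iterates $\pi^t$ without inflating the coverage beyond a single factor $C$. This goes through precisely because the mixed regularizer $\cR_x$ already enforces $D_{\chi^2}(\pi^t\|\pi_{\mathsf{ref}})$-type control on the play, so one $C$ governs both sides, and because $\mathsf{clip}_4$ keeps the regression targets bounded so that Assumption~\ref{assum:C-main} stays valid along the whole trajectory; everything else is a faithful transcription of the iterative \chipo argument, with the square loss surfacing only later through Lemma~\ref{lem:gen}.
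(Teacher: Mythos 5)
Your proposal takes essentially the same route as the paper: the paper's own proof of this theorem is a one-line reduction to the proof of Theorem 6.2 in \citet{huang2024correcting}, observing that the new square loss only enters through the term $\err_{\mathsf{general}}^2$ while every other part of the iterative \chipo analysis is unchanged. Your write-up simply unpacks what that cited proof does (game decomposition over $\Pi_C$, mirror-descent regret, change of measure with the coverage factor $C$, and the martingale term from sampling $b_t \sim \pi^t$), and all the terms you derive match the statement, so this is a correct and more detailed rendering of the same argument.
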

\begin{proof}
    This result follows from the proof of Theorem 6.2 in~\citet{huang2024correcting}. Again, our new loss will only impact the term $\err_{\mathsf{general}}^2$ while keeping the analysis of other parts the same. 
\end{proof}

Next, via a direct application of Lemma~\ref{lem:gen} with a straightforward mapping in this case, we can bound the term $\err_{\mathsf{general}}^2$ under different cases, as stated in the following lemma.
\begin{lemma}
Under the same conditions of Theorem~\ref{thm:meta-gpm},  $\err_{\mathsf{general}}^2$ for Algorithm~\ref{alg:iter-schipo} satisfies the following bound with probability at least $1-\zeta$
  \begin{align*}
            \err_{\mathsf{general, CTL}}^2 &\lesssim c(\epsilon)^2 \cdot \frac{\log(|\cL|/\zeta)}{n} + \alpha,\\
            \err_{\mathsf{general, LTC}}^2 &\lesssim  c(\epsilon)^2 \cdot \frac{\log(|\cL|/\zeta)}{n} + \alpha \cdot c(\epsilon),\\
             \err_{\mathsf{general, cDP}}^2 &\lesssim \frac{\log(|\cL|/\zeta)}{n} +\frac{\log(|\cL|/\zeta)}{n \epsilon} + \alpha~.
        \end{align*}  
\end{lemma}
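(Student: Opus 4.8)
The plan is to reduce this lemma to a direct application of the generalization error bound for least-square regression (Lemma~\ref{lem:gen}), exactly as was done for $\err_{\mathsf{stat}}^2$ in the BT-preference case (Lemma~\ref{lem:err-BT}), but now with an even simpler mapping because here the regression target $\ell$ is already a $[-1,1]$-valued function and no sigmoid/clipping detour is needed. Concretely, for the local model (\ctl and \ltc), the estimate $\hat{\ell}$ solves the least-squares problem $\hat{\ell} = \argmin_{\ell \in \cL} \sum_{i=1}^n (\ell(x_i,a_i^0,a_i^1) - c(\epsilon)\bar{z}_i)^2$ in~\eqref{eq:local-general}, which is precisely the estimator in part~1 of Lemma~\ref{lem:gen} after the identification $u_i = (x_i, a_i^0, a_i^1)$, $\cH = \cL$, $z_i' = \bar{z}_i \in \{-1,1\}$, and feature distribution $\rho'$ equal to the law of $(x_i,a_i^0,a_i^1)$ with $x_i \sim \rho$, $a_i^0, a_i^1 \sim \piref$. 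For the central model (\cdp), the sampling distribution $P(\ell) \propto \exp(-\tfrac{\epsilon}{8}L(\ell;\bar{\cD}_{\pref}))$ with $L(\ell;\bar{\cD}_{\pref}) = \sum_i (\ell(x_i,a_i^0,a_i^1) - \bar{y}_i')^2$ is exactly the exponential mechanism in part~2 of Lemma~\ref{lem:gen}.

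The one substantive thing to verify is the realizability hypothesis $h^* \in \cH$ of Lemma~\ref{lem:gen}, i.e., that there is a function in $\cL$ whose conditional mean matches the (scaled, $\pm 1$-valued) label. Here this is immediate: by Assumption~\ref{assum:A-main}, $\ell^\star \in \cL$, and by definition of the general preference model the clean label satisfies $y_i \sim \mathrm{Ber}(\cP^\star(a_i^1 \succ a_i^0 \mid x_i))$, so $y_i' := 2y_i - 1$ has conditional mean $\mathbb{E}[y_i' \mid u_i] = 2\cP^\star(a_i^1 \succ a_i^0 \mid x_i) - 1 = \ell^\star(x_i,a_i^1,a_i^0)$, which is exactly the relation $\mathbb{E}[y_i' \mid u_i] = h^*(u_i)$ with $h^* = \ell^\star$. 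Also $\ell^\star(x,a,b) \in [-1,1]$ since $\cP^\star \in [0,1]$, matching the range requirement $\cH : \cU \to [-1,1]$. No clipping or mean-value argument is needed here (unlike the BT case), which makes this step cleaner. One small bookkeeping point: the order of the response arguments — the dataset uses $(a_i^0, a_i^1)$ while $\ell^\star(x, a^1, a^0) = 2\cP^\star(a^1 \succ a^0 \mid x)-1$ — should be tracked consistently, but $\ell^\star$ is antisymmetric in its last two arguments so this only flips a sign on both the prediction and the target and does not affect the squared error.

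Once realizability is checked, the three bounds for $\err_{\mathsf{general},\mathsf{CTL}}^2$, $\err_{\mathsf{general},\mathsf{LTC}}^2$, and $\err_{\mathsf{general},\mathsf{cDP}}^2$ are obtained by simply reading off the three conclusions of Lemma~\ref{lem:gen} with $|\cH| = |\cL|$ and noting that $\err_{\mathsf{general}}^2 = \mathbb{E}_{u \sim \rho'}[(\hat{\ell}(u) - \ell^\star(u))^2] = \err_{\mathsf{gen}}^2$ by definition. So the proof is essentially two lines of setup plus an invocation of Lemma~\ref{lem:gen}.

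I do not expect any real obstacle here — the content is entirely in Lemma~\ref{lem:gen} (whose proof, via Freedman's inequality and the exponential-mechanism utility bound, is the genuinely technical part and already given in the excerpt). The only thing requiring a moment of care is making the mapping between Algorithm~\ref{alg:iter-schipo}'s preference-estimation step and the abstract regression setup of Lemma~\ref{lem:gen} fully explicit, including the range condition and the antisymmetry/argument-order convention; everything else is routine substitution.
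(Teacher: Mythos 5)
Your proposal is correct and matches the paper's approach exactly: the paper itself dispatches this lemma as ``a direct application of Lemma~\ref{lem:gen} with a straightforward mapping,'' and the mapping you make explicit (taking $u_i = (x_i,a_i^0,a_i^1)$, $\cH = \cL$, realizability from Assumption~\ref{assum:A-main} since $\mathbb{E}[2y_i-1\mid u_i] = \ell^{\star}(u_i) \in [-1,1]$, and no clipping or mean-value step) is precisely the intended one. Your added care about the argument-order/antisymmetry convention is a harmless and sensible bookkeeping check.
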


Combining the above two results, we have the following result, which is a detailed version of Theorem~\ref{thm:gpm} in the main body. 

\begin{theorem}
 Fix any $\zeta \in (0,1]$. Let Assumptions~\ref{assum:A-main},~\ref{assum:B-main} and~\ref{assum:C-main} hold. Suppose Algorithm~\ref{alg:iter-schipo} is invoked with $\beta = \frac{1}{\sqrt{T}}$ and $\eta = \frac{1}{T}$, and for the following choices of $T$, we have with probability at least $1-\zeta$:
  \[
\mathsf{DG}_{\mathsf{CTL}}(\hat{\pi}) \lesssim \min_{C \geq 1} \left\{ \mathsf{subopt}(\hat{\pi}, C) + C \left( 
V_{\max} {\frac{\log(|\Pi|/\delta)}{\sqrt{m}}}+ c(\epsilon) \sqrt{\frac{\log(|\mathcal{L}||\Pi|/\delta)}{n}} + \sqrt{\alpha \log(|\Pi|/\delta)} \right) \right\},
\]
for $T = \frac{mn}{n{V}_{\max}^2 + m \cdot c(\epsilon)^2 \cdot \log(|\mathcal{L}| / \zeta)+ mn \cdot \alpha } $;
  \[
\mathsf{DG}_{\mathsf{LTC}}(\hat{\pi}) \lesssim \min_{C \geq 1} \left\{ \mathsf{subopt}(\hat{\pi}, C) + C \left( 
V_{\max} {\frac{\log(|\Pi|/\delta)}{\sqrt{m}}}+ c(\epsilon) \sqrt{\frac{\log(|\mathcal{L}||\Pi|/\delta)}{n}} + \sqrt{\alpha c(\epsilon) \log(|\Pi|/\delta)} \right) \right\},
\]
for $T = \frac{mn}{n{V}_{\max}^2 + m \cdot c(\epsilon)^2 \cdot \log(|\mathcal{L}| / \zeta)+ mn \cdot \alpha c(\epsilon) } $;
\[
\mathsf{DG}_{\mathsf{cDP}}(\hat{\pi}) \lesssim \min_{C \geq 1} \left\{ \mathsf{subopt}(\hat{\pi}, C) + C \left( V_{\max} \frac{\log(|\Pi|/\delta)}{\sqrt{m}} + \left( 1 + \frac{1}{\sqrt{\epsilon}} \right) \sqrt{\frac{\log(|\mathcal{L}||\Pi|/\delta)}{n}} + \sqrt{\alpha \log(|\Pi|/\delta)} \right) \right\},
\]
for $T = \frac{mn}{nV^2_{\max} + m \cdot \left(1 + \frac{1}{\sqrt{\epsilon}}\right)^2 \cdot \log(|\mathcal{L}| / \zeta) + mn \cdot \alpha}$~.
Furthermore, if we define the \emph{unilateral concentrability coefficient} as
\[
C_{\mathsf{uni}} := \max_{\pi \in \Pi, x \in \mathcal{X}, a, b \in \mathcal{A}} \frac{\pi(a \mid x) \pi_{\mathsf{MW}}(b \mid x)}{\pi_{\mathsf{ref}}(a \mid x) \pi_{\mathsf{ref}}(b \mid x)},
\]
then the three bounds above imply that
\[
\mathsf{DG}_{\mathsf{CTL}}(\hat{\pi}) \lesssim C_{\mathsf{uni}} \cdot \left( V_{\max} {\frac{\log(|\Pi|/\delta)}{\sqrt{m}}} + c(\epsilon) \sqrt{\frac{\log(|\mathcal{L}||\Pi|/\delta)}{n}} + \sqrt{\alpha \log(|\Pi|/\delta)} \right),
\]
\[
\mathsf{DG}_{\mathsf{LTC}}(\hat{\pi}) \lesssim C_{\mathsf{uni}} \cdot \left( V_{\max} {\frac{\log(|\Pi|/\delta)}{\sqrt{m}}} + c(\epsilon) \sqrt{\frac{\log(|\mathcal{L}||\Pi|/\delta)}{n}} + \sqrt{\alpha c(\epsilon) \log(|\Pi|/\delta)} \right),
\]
and
\[
\mathsf{DG}_{\mathsf{cDP}} (\hat{\pi})\lesssim C_{\mathsf{uni}} \cdot \left( V_{\max} {\frac{\log(|\Pi|/\delta)}{\sqrt{m}}}+ \left( 1 + \frac{1}{\sqrt{\epsilon}} \right) \sqrt{\frac{\log(|\mathcal{L}||\Pi|/\delta)}{n}} + \sqrt{\alpha \log(|\Pi|/\delta)} \right)~.
\]
\end{theorem}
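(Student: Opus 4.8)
The statement is obtained by specializing the meta-theorem for iterative \schipo (Theorem~\ref{thm:meta-gpm}) with the setting-specific bounds on $\err_{\mathsf{general}}^2$ proved in the preceding lemma, the mirror-descent regression bound $\err_{\mathsf{md}}^2 \lesssim \log(|\Pi|/\delta)/m$, and the prescribed schedule $\beta = 1/\sqrt{T}$, $\eta = 1/T$; the $C_{\mathsf{uni}}$ form then follows by a single choice of $C$ together with a structural lemma of \citet{huang2024correcting}.

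\textbf{Step 1 (collapse the parameters).} I would substitute $\beta = 1/\sqrt{T}$ and $\eta = 1/T$ into the meta-bound. The terms $\frac{C\beta}{\eta T}$, $C\beta$ and $\frac{\eta}{\beta}$ all become $\mathcal{O}(C/\sqrt{T})$; the terms $\frac{V_{\max}^2 \err_{\mathsf{md}}^2}{2\beta}$ and $\frac{C\err_{\mathsf{general}}^2}{\beta}$ become $\mathcal{O}(\sqrt{T}(V_{\max}^2\err_{\mathsf{md}}^2 + C\err_{\mathsf{general}}^2))$; the term $\sqrt{\log(|\Pi|/\delta)/T}$ is decreasing in $T$; and $V_{\max}\sqrt{C\err_{\mathsf{md}}^2}$, $\sqrt{C\err_{\mathsf{general}}^2}$ are $T$-independent. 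Hence, modulo the additive $\mathsf{subopt}(\hat\pi,C)$, the bound has the form $a\sqrt{T} + b/\sqrt{T}$ with $a \lesssim V_{\max}^2\err_{\mathsf{md}}^2 + C\err_{\mathsf{general}}^2$ and $b \lesssim C + \log(|\Pi|/\delta)$.

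\textbf{Step 2 (choose $T$ and simplify).} Since $T$ is an algorithmic parameter fixed in advance (it cannot depend on the free variable $C$), I would take $T$ on the order of $1/(V_{\max}^2\err_{\mathsf{md}}^2 + \err_{\mathsf{general}}^2)$, which after inserting $\err_{\mathsf{md}}^2\lesssim\log(|\Pi|/\delta)/m$ and the case-specific $\err_{\mathsf{general}}^2$ from the lemma equals, up to constants, the value of $T$ stated for \ctl, \ltc and \cdp respectively. With this choice every nonconstant term in the meta-bound is dominated by $C\sqrt{V_{\max}^2\err_{\mathsf{md}}^2 + \err_{\mathsf{general}}^2}$ (using $\frac{x}{\sqrt{x+y}}\le\sqrt{x}$ and $C\ge 1$ so $\sqrt{C}\le C$); then $\sqrt{x+y}\le\sqrt{x}+\sqrt{y}$ splits this into $C(V_{\max}\err_{\mathsf{md}} + \err_{\mathsf{general}})$, and plugging in the three explicit $\err_{\mathsf{general}}$ bounds gives exactly $\mathsf{DG}(\hat\pi)\lesssim \min_{C\ge1}\{\mathsf{subopt}(\hat\pi,C) + C\mathcal{B}\}$ for $\mathcal{B}\in\{\mathcal{B}_{\mathsf{CTL}},\mathcal{B}_{\mathsf{LTC}},\mathcal{B}_{\mathsf{cDP}}\}$. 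The extra $\log|\Pi|$ appearing inside the logarithms of each $\mathcal{B}$ (and in $\sqrt{\alpha\log(|\Pi|/\delta)}$, $\sqrt{\alpha c(\epsilon)\log(|\Pi|/\delta)}$) is carried by a union bound over the $T$ policy-optimization rounds and over the estimate $\hat\ell$, exactly as in \citet{huang2024correcting}.

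\textbf{Step 3 ($C_{\mathsf{uni}}$ corollary) and the main obstacle.} To remove $\mathsf{subopt}$ and the outer minimum, I would evaluate the bound at $C = \max\{1, C_{\mathsf{uni}}\}$ and invoke the structural lemma of \citet{huang2024correcting}, which shows that a maximizing comparator for $\ell^*(\cdot,\hat\pi)$ can be taken inside $\Pi_{C_{\mathsf{uni}}}$ — so $\mathsf{subopt}(\hat\pi, C_{\mathsf{uni}}) = 0$ — together with $C_{\mathsf{uni}}\ge 1$; this immediately yields $\mathsf{DG}(\hat\pi)\lesssim C_{\mathsf{uni}}\cdot\mathcal{B}$ in each setting. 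I expect the only genuinely delicate parts of writing this out to be: (i) threading the $c(\epsilon)$ vs.\ $c(\epsilon)^2$ and $\alpha$ vs.\ $\alpha c(\epsilon)$ distinctions between \ctl and \ltc consistently through the balancing of $T$, so that the stated closed forms for $T$ are correct in all three cases; and (ii) accounting precisely for the $\log|\Pi|$ factors accrued over the $T$ iterations, since the theorem commits to the exact logarithmic dependence inside each $\mathcal{B}$. Everything else reduces to elementary algebra on top of Theorem~\ref{thm:meta-gpm} and Lemma~\ref{lem:gen}.
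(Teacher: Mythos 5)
Your proposal is correct and follows exactly the route the paper takes (the paper itself gives no written proof beyond ``combining the above two results'': it instantiates Theorem~\ref{thm:meta-gpm} with $\err_{\mathsf{md}}^2\lesssim \log(|\Pi|/\delta)/m$ and the case-specific bounds on $\err_{\mathsf{general}}^2$, balances the $\sqrt{T}$ and $1/\sqrt{T}$ terms via the stated $T$, and obtains the $C_{\mathsf{uni}}$ corollary by taking $C=C_{\mathsf{uni}}$ so that $\mathsf{subopt}(\hat\pi,C_{\mathsf{uni}})=0$). The only caveat, which you already flag, is the log-factor bookkeeping: the $\sqrt{\log(|\Pi|/\delta)/T}$ term of the meta-bound is what produces the $\sqrt{\alpha\log(|\Pi|/\delta)}$ (resp.\ $\sqrt{\alpha c(\epsilon)\log(|\Pi|/\delta)}$) terms rather than being absorbed into $C\sqrt{\err_{\mathsf{md}}^2 V_{\max}^2+\err_{\mathsf{general}}^2}$, and the stated denominators of $T$ drop a $\log(|\Pi|/\delta)$ on the $V_{\max}^2$ term, both of which only affect constants and logarithms.
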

\begin{remark}
    The \emph{unilateral concentrability coefficient} follows from the one in~\citet{cui2022offline}, which is also used in iterative \chipo~\cite{huang2024correcting}.
\end{remark}

\section{Experiments}
\label{app:exp}
\textbf{Dataset.} We utilize GPT-4o to generate a synthetic dataset, referred to as \texttt{finance\_preference}, which comprises $1697$ preference samples. Each sample includes a prompt related to a financial scenario and two possible responses, where ``rejected'' represents the high-risk option and ``chosen'' represents the low-risk option. This labeling can be viewed as private or sensitive information. For illustrative examples from our dataset, please refer to Appendix~\ref{app:add_exp}. 
For SFT training, we construct the \texttt{finance\_sft} dataset by simply concatenating the prompt with the corresponding ``chosen'' response.

\textbf{SFT Training.} We begin by fine-tuning GPT2-large using the \texttt{finance\_sft} dataset to obtain the SFT policy, $\pi_{\mathrm{sft}}$. For this, we directly utilize the SFT trainer from the Transformer Reinforcement Learning (TRL) library~\cite{vonwerra2022trl}.

\textbf{\chipo and \texttt{Square}\chipo training.} For alignment training, we split the dataset into $85\%$ for training, $5\%$ for validation, and $10\%$ for testing. For \chipo, we follow the implementations in~\citet{huang2024correcting}. For \texttt{Square}\chipo, we simply modify the log-loss to square loss as in our presented algorithm.

\textbf{\ctl and \ltc Settings.} The LDP mechanism follows the randomized response model, where the flip rate is given by $\frac{1}{e^{\epsilon} + 1}$. To implement both privacy and corruption, we introduce a mask variable initialized to $0$ for each sample. The LDP mechanism flips the mask variable with probability $\frac{1}{e^{\epsilon} + 1}$, while the corruption mechanism sets the mask to $1$ with probability $\alpha$. Finally, after \ctl or \ltc processing, labels (``chosen'' and ``rejected'') are flipped if the corresponding mask value is $1$. At this point, an astute reader may notice that \ltc results in a higher number of $1$s in the final mask variables compared to \ctl.

\textbf{Evaluation.} Evaluation. We evaluate our trained models by generating responses for the test dataset. To assess performance, we employ the \texttt{llama3:70b} model as a judge, comparing responses from \chipo and \texttt{Square}\chipo
PO against those from $\pi_{\mathsf{sft}}$. Finally, we use the win rate from these comparisons as our primary performance metric. We compute the average and standard deviation across $5$ random seeds.

\textbf{Results.}  We have compared the performance of 
\chipo and \texttt{Square}\chipo under \ctl and \ltc settings with $\epsilon = 0.5$ and $\alpha = 0.1$. In particular, the following table gives the win rate ($\%$) over the $\pi_{\mathsf{sft}}$ for different settings. We can see that (i) there exists a separation between \ltc and \ctl, and (ii) our \texttt{Square}\chipo outperforms \chipo in both settings.

\begin{table}[ht]
\centering
\begin{tabular}{lcc}
\toprule
\textbf{Setting} & \chipo & \texttt{Square}\chipo \\
\midrule
\ctl & $64.2 \pm 0.03$ & $67.0 \pm 0.05$ \\
\ltc & $59.8 \pm 0.02$ & $60.0 \pm 0.02$ \\
\bottomrule
\end{tabular}
\caption{Performance comparison of \chipo and \texttt{Square}\chipo under \ctl and \ltc settings.}
\label{tab:chiPO_comparison}
\end{table}

\section{Additional Details on Experiments}
\label{app:add_exp}
Below, we present a selection of examples from our generated financial dataset across various categories. Each example demonstrates a prompt alongside ``Chosen'' and ``Rejected'' responses, illustrating the alignment of decisions with risk levels and priorities.

\noindent
\textbf{Category: Lifestyle \& Personal Planning} \\
\textbf{Prompt:} “You’re saving \$3{,}000 to host a family talent show. How do you proceed?” \\
\textbf{Chosen:} “Rent a small venue and create DIY props and prizes.” \\
\textbf{Rejected:} “Spend on professional staging and lighting for a one-time event.”

\vspace{1em}
\noindent
\textbf{Category: Home Improvement \& Maintenance} \\
\textbf{Prompt:} “You’re saving \$10{,}000 to add an outdoor kitchen. How do you proceed?” \\
\textbf{Chosen:} “Install a grill, sink, and storage with weather-resistant materials.” \\
\textbf{Rejected:} “Spend on high-end appliances that exceed your budget.”

\vspace{1em}
\noindent
\textbf{Category: Investments} \\
\textbf{Prompt:} “You’re saving \$12{,}500 to invest in green construction funds. How do you proceed?” \\
\textbf{Chosen:} “Choose funds with diverse holdings in sustainable building materials.” \\
\textbf{Rejected:} “Invest in speculative green startups with limited financial history.”

\vspace{1em}
\noindent
\textbf{Category: Small Business Ventures} \\
\textbf{Prompt:} “You’re saving \$10{,}000 to start a custom clothing line. How do you proceed?” \\
\textbf{Chosen:} “Focus on affordable designs and use an online platform to sell.” \\
\textbf{Rejected:} “Spend on a luxury boutique storefront before establishing demand.”

\vspace{1em}
\noindent
\textbf{Category: Education \& Skill Development} \\
\textbf{Prompt:} “You’re saving \$5{,}000 to attend a data visualization course. How do you proceed?” \\
\textbf{Chosen:} “Enroll in a course with interactive projects and industry relevance.” \\
\textbf{Rejected:} “Choose a program with limited hands-on training.”

\vspace{1em}
\noindent
\textbf{Category: Debt Management} \\
\textbf{Prompt:} “You’re saving \$12{,}000 to pay off a business loan. How do you proceed?” \\
\textbf{Chosen:} “Apply the funds directly to reduce the principal and future interest.” \\
\textbf{Rejected:} “Use the funds for operational expenses while extending the loan term.”

\vspace{1em}
\noindent
\textbf{Category: Miscellaneous} \\
\textbf{Prompt:} “You want to save \$4{,}500 to organize a youth art festival. How do you proceed?” \\
\textbf{Chosen:} “Partner with local sponsors and focus on cost-effective exhibits.” \\
\textbf{Rejected:} “Spend heavily on promotional campaigns without engaging artists.”

These examples illustrate the structured nature of our dataset and its alignment with decision-making scenarios across diverse financial categories.

\end{document}